\theoremstyle{plain}
\newtheorem{theorem}{Theorem}[section]
\newtheorem{proposition}[theorem]{Proposition}
\newtheorem{lemma}[theorem]{Lemma}
\theoremstyle{definition}
\newtheorem{definition}[theorem]{Definition}
\newtheorem{assumption}[theorem]{Assumption}
\theoremstyle{remark}
\definecolor{bgcolor}{rgb}{0.66,0.88,1.00}
\newcommand{\dataset}[1]{{\tt #1}\xspace}
\def\1{\bm{1}}
\def\vone{{\bm{1}}}
\def\va{{\bm{a}}}
\def\vb{{\bm{b}}}
\newcommand{\ve}{\@ifnextchar\bgroup{\velong}{{\bm{e}}}}
\newcommand{\velong}[1]{{\bm{#1}}}
\def\vg{{\bm{g}}}
\def\vh{{\bm{h}}}
\def\vm{{\bm{m}}}
\def\vq{{\bm{q}}}
\def\vs{{\bm{s}}}
\def\vu{{\bm{u}}}
\def\vv{{\bm{v}}}
\def\vx{{\bm{x}}}
\def\mA{{\bm{A}}}
\def\mG{{\bm{G}}}
\def\mH{{\bm{H}}}
\def\mI{{\bm{I}}}
\def\mM{{\bm{M}}}
\def\mQ{{\bm{Q}}}
\def\mV{{\bm{V}}}
\def\mW{{\bm{W}}}
\def\mX{{\bm{X}}}
\DeclareMathAlphabet{\mathsfit}{\encodingdefault}{\sfdefault}{m}{sl}
\SetMathAlphabet{\mathsfit}{bold}{\encodingdefault}{\sfdefault}{bx}{n}
\def\gC{{\mathcal{C}}}
\def\gD{{\mathcal{D}}}
\def\gI{{\mathcal{I}}}
\def\gN{{\mathcal{N}}}
\newcommand{\E}{\mathbb{E}}
\newcommand{\R}{\mathbb{R}}
\DeclareMathOperator{\sign}{sign}
\newcommand{\dotp}[2]{\left<#1, #2\right>}
\newcommand{\norm}[1]{\left\| #1 \right\|}
\newcommand{\normF}[1]{\left\| #1 \right\|_{\mathrm{F}}}
\def\eqref#1{(\ref{#1})}
\newcommand{\algname}[1]{{\sf\small#1}\xspace}
\newcommand{\beer}{{\sf\small BEER}\xspace}
\title{BEER: Fast $O(1/T)$ Rate for Decentralized Nonconvex Optimization with Communication Compression}
\author{
	Haoyu Zhao\thanks{Department of Computer Science, Princeton University, Princeton, NJ 08540, USA; Email: \texttt{haoyu@princeton.edu}.} \\
	Princeton \\
	\and
	Boyue Li\thanks{Department of Electrical and Computer Engineering, Carnegie Mellon University, Pittsburgh, PA 15213, USA; Emails:
		\texttt{\{boyuel,zhizel,yuejiec\}@andrew.cmu.edu}.} \\
		CMU \\ 
		\and
    Zhize Li\footnotemark[2] \textsuperscript{,}\footnote{Corresponding author.}  \\
		CMU \\
		\and
		Peter Richt\'arik\thanks{Computer, Electrical and Mathematical Sciences and Engineering Division, King Abdullah University of Science and Technology, Thuwal 23955-6900, Kingdom of Saudi Arabia; Email: \texttt{peter.richtarik@kaust.edu.sa}.}  \\
		KAUST \\
		\and
	Yuejie Chi\footnotemark[2] \\
	CMU
}
\date{January 2022; October 2022 Revised}
\begin{document}

\maketitle

\begin{abstract}
 
Communication efficiency has been widely recognized as the bottleneck for large-scale decentralized machine learning applications in multi-agent or federated environments. To tackle the communication bottleneck, there have been many efforts to design communication-compressed algorithms for decentralized nonconvex optimization, where the clients are only allowed to communicate a small amount of quantized information (aka bits) with their neighbors over a predefined graph topology. Despite significant efforts, the state-of-the-art algorithm in the nonconvex setting still suffers from a slower rate of convergence $O((G/T)^{2/3})$ compared with their uncompressed counterpart, where $G$ measures the data heterogeneity across different clients, and $T$ is the number of communication rounds. This paper proposes \beer, which adopts communication compression with gradient tracking, and shows it converges at a \emph{faster rate} of $O(1/T)$. This significantly improves over the state-of-the-art rate, by matching the rate without compression even under arbitrary data heterogeneity. Numerical experiments are also provided to corroborate our theory and confirm the practical superiority of \beer in the data heterogeneous regime.

\end{abstract}

\medskip
\noindent\textbf{Keywords:} decentralized nonconvex optimization, communication compression, fast rate.

\section{Introduction}

Decentralized machine learning is gaining attention in both academia and industry because of its emerging applications in multi-agent systems such as the internet-of-things (IoT) and networked autonomous systems~\citep{marvasti2014optimal,savazzi2020federated}. One of the key problems in decentralized machine learning is on-device training, which aims to optimize a machine learning model using the datasets stored on (geographically) different clients, and can be formulated as a \emph{decentralized optimization} problem.

Decentralized optimization aims to solve the following optimization problem without sharing the local datasets with other clients:
\begin{align} \label{eq:prob}
    \min_{\vx\in\R^d}\left\{ f(\vx; \gD) := \frac{1}{n}\sum_{i=1}^n f(\vx; \gD_i)\right\},
\end{align}
where $f(\vx; \gD_i) := \E_{\xi_i\sim\gD_i} f(\vx; \xi_i)$ for $i\in [n]$, and $n$ is the total number of clients. Here, $\vx \in \mathbb{R}^d$ is the machine learning model, $f(\vx; \gD)$, $f(\vx; \gD_i)$, and $f(\vx; \xi_{i})$ denote the loss functions of the model $\vx$ on the entire dataset $\gD$, the local dataset $\gD_i$, and a random data sample $\xi_{i}$, respectively. Different from the widely studied distributed or federated learning setting where there is a central server to coordinate the parameter sharing across all clients, in the decentralized setting, each client can only communication with its neighbors over a communication network determined by a predefined network topology.

The main bottleneck of decentralized optimization---when it comes to large-scale machine learning applications---is communication efficiency, due to the large number of clients involved in the network~\citep{savazzi2020federated} and the enormous size of machine learning models~\citep{brown2020language}, exacerbated by resource constraints such as limited bandwidth availability and stringent delay requirements. One way to reduce the communication cost is communication compression, which only transmits compressed messages (with fewer bits) between the clients using \emph{compression operators}. The compression operators come with many design choices and offer great flexibility in different trade-offs of communication and computation in practice. Even though communication compression has been extensively applied to distributed or federated optimization with a central server \citep{stich2018sparsified,karimireddy2019error,das2020improved,li2020acceleration,gorbunov2021marina,richtarik2021ef21,richtarik20223pc,li2022soteriafl}, its use in the decentralized setting has been relatively sparse. Most of the existing approaches only apply to the strongly convex setting \citep{reisizadeh2019exact,koloskova2019decentralized,liu2020linear,kovalev2021linearly,liao2021compressed,li2021decentralized}, and only a few consider the general nonconvex setting~\citep{koloskova2019decentralizeddeep,tang2019deepsqueeze,singh2021squarm}.

\subsection{Our contributions}

This paper considers decentralized optimization with communication compression, focusing on the {\em nonconvex} setting due to its critical importance in modern machine learning, such as training deep neural networks~\citep{lecun2015deep}, word embeddings, and other unsupervised learning models~\citep{saunshi2019theoretical}. Unfortunately, existing algorithms~\citep{koloskova2019decentralizeddeep,tang2019deepsqueeze,singh2021squarm} suffer from several important drawbacks in the nonconvex setting: they need strong bounded gradient or bounded dissimilarity assumptions to guarantee convergence, and the convergence rate is order-wise slower than their uncompressed counterpart in terms of the communication rounds (see Table \ref{tab:existed-results}).

In this paper, we introduce \beer, which is a decentralized optimization algorithm with communication compression using gradient tracking. \beer not only removes the strong assumptions required in all prior works, but enjoys a faster convergence rate in the nonconvex setting. Concretely, we have the following main contributions (see Tables~\ref{tab:existed-results} and \ref{tab:our-results}).
\begin{enumerate}
    \item We show that \beer converges at a fast rate of $O(1/T)$ in the nonconvex setting, which improves over the state-of-the-art rate $O(1/T^{2/3})$ of \algname{CHOCO-SGD}~\citep{koloskova2019decentralizeddeep} and \algname{Deepsqueeze}~\citep{tang2019deepsqueeze}, where $T$ is the number of communication rounds. This matches the rate without compression even under arbitrary data heterogeneity across the clients.
    \item We also provide the analysis of \beer under the Polyak- \L ojasiewicz (PL) condition (Assumption \ref{ass:pl}), and show that \beer converges at a linear rate (see Table~\ref{tab:our-results}). Note that strong convexity implies the PL condition, and thus \beer also achieves linear convergence in the strongly convex setting.
    \item We run numerical experiments on real-world datasets and show \beer achieves superior or competitive performance when the data are heterogeneous compared with state-of-the-art baselines with and without communication compression.
\end{enumerate}

To the best of our knowledge, \beer is the {\em first} algorithm that achieves $O(1/T)$ rate without the bounded gradient or bounded dissimilarity assumptions, supported by a strong empirical performance in the data heterogeneous setting.

\begin{table*}[!t]
\renewcommand{\arraystretch}{1.5}
    \centering
 
    \begin{tabular}{|c|c|c| }
    \hline
    \bf Algorithm & \bf Convergence rate & \bf Strong assumption   \\
    \hline
        \makecell{\algname{SQuARM-SGD}  \\ \citep{singh2021squarm}} & $O\left(\frac{1}{\sqrt{nT}} + \frac{nG^2}{T}\right)$ & Bounded Gradient    \ \ \ \ \\ \hline
        \makecell{\algname{DeepSqueeze}   \\ \citep{tang2019deepsqueeze}} & $O\left(\left(\frac{G}{T}\right)^{2/3}\right)$ & Bounded Dissimilarity   \ \ \ \  \\ \hline
        \makecell{\algname{CHOCO-SGD}    \\  \citep{koloskova2019decentralizeddeep}} & $O\left(\left(\frac{G}{T}\right)^{2/3}\right)$  & Bounded Gradient \\ \hline
        \cellcolor{bgcolor} \beer (Algorithm \ref{alg:becdo})   \ \ \ \  & \cellcolor{bgcolor} $O\left(\frac{1}{T}\right)$ & \cellcolor{bgcolor} --- \\ \hline
    \end{tabular}
     \caption{Comparison of convergence rates for existing decentralized methods with communication compression in the nonconvex setting. Here, the parameter $G$ refers the quantity either in the bounded gradient assumption $\E_{\xi_i\sim \gD_i}\norm{\nabla f(\vx, \xi_i)}^2 \le G^2$ or the bounded dissimilarity assumption $\E_{i}\norm{\nabla f(\vx, \gD_i) - \nabla f(\vx, \gD)}^2 \le G^2$, both of which are very strong assumptions  (the bounded dissimilarity assumption is slightly weaker) that \beer does {\em not} require. All algorithms support the use of stochastic gradients with bounded local variance at local clients.}
    \label{tab:existed-results}

\end{table*}
    
   \begin{table*}[!h]
   	\renewcommand{\arraystretch}{1.5}
   	\centering
 %   \begin{threeparttable}
    \begin{tabular}{|c|c|c|}
    \hline
    \bf Assumptions & \bf Convergence rate & \bf Theorem \\
    \hline
        \makecell{$f_i$ is $L$-smooth} & $\frac{1}{T}\sum_{t=0}^{T-1} \E \norm{\nabla f(\bar{\vx}^t)}^2 \le \frac{2(\Phi_0 - \Phi_T)}{\eta T}$ ~\tnote{(5)}\ \ \ \  & Theorem~\ref{thm:nonconvex-full} \\
    \hline
        \makecell{$f_i$ is $L$-smooth \\ $f$ satisfies PL condition} & $\Phi_T \le (1-\mu \eta)^T \Phi_0$ ~\tnote{(6)}\ \ \ \  & Theorem~\ref{thm:pl-full} \\
    \hline
    \end{tabular}
    \caption{Summary of the established convergence rates for the proposed \beer algorithm in the nonconvex setting. Here, $\bar{\vx}^t$ is the average model of all clients, $\eta$ is the step size, $\Phi_t$ is the Lyapunov function (cf.~\eqref{eq:lf}), and $\mu$ is the PL-condition parameter (cf.~Assumption \ref{ass:pl}). We do not assume the bounded gradient or bounded dissimilarity assumption.  }   	\label{tab:our-results}
 
\end{table*}

\subsection{Related works}
In this section, we review closely related literature on decentralized optimization, communication-efficient algorithms, and communication compression.

\paragraph{Decentralized optimization}

Decentralized optimization, which is a special class of linearly constrained (consensus constraint) optimization problems, has been studied for a long time~\citep{bertsekas2015parallel,glowinski1975solution}. Many centralized algorithms can be intuitively converted into decentralized counterparts by using 
gossip averaging~\citep{kempe2003gossip,xiao2004fast}, which mixes parameters from neighboring clients to enforce consensus. 

However, direct applications of gossip averaging often lead to either slow convergence or high error floors  \citep{nedic2009distributed}, and many fixes have been proposed in response \citep{shi2015extra,yuan2018exact,qu2017harnessing,di2016next,nedic2017achieving}. Among them, gradient tracking \citep{qu2017harnessing,di2016next,nedic2017achieving}, which applies the idea of dynamic average consensus \citep{zhu2010discrete} to global gradient estimation, provides a systematic approach to reduce the variance and has been successfully applied to decentralize many algorithms with faster rates of convergence \citep{li2020communication,sun2019distributed}. For nonconvex problems, a small sample of gradient tracking aided algorithms include \algname{GT-SAGA}~\citep{xin2021fast}, \algname{D-GET}~\citep{sun2020improving},  \algname{GT-SARAH}~\citep{xin2020fast}, and
\algname{DESTRESS}~\citep{li2021destress}. Our \beer algorithm also leverages gradient tracking to eliminate the strong bounded gradient and bounded dissimilarity assumptions.

\paragraph{Communication-efficient algorithms}
While decentralized optimization is a classical topic, the focus on communication efficiency is relatively new due to the advances in large-scale machine learning. Roughly speaking, there are primarily two kinds of approaches to reduce communication cost: 1) \emph{local methods}: in each communication round, clients run multiple local update steps before communicating, in the hope of reducing the number of communication rounds;
2) \emph{compressed methods}: clients send compressed communication messages, in the hope of reducing the communication cost per communication round.

Both categories have received significant attention in recent years. For local methods, a small sample of examples include \algname{FedAvg}~\citep{mcmahan2017communication}, \algname{Local-SVRG}~\citep{gorbunov2020local}, \algname{SCAFFOLD}~\citep{karimireddy2020scaffold} and \algname{FedPAGE}~\citep{zhao2021fedpage}.
On the other hand, many compressed methods are proposed recently such as \citep{alistarh2017qsgd, khirirat2018distributed,tang2018communication,stich2018sparsified,koloskova2019decentralizeddeep, li2020acceleration, gorbunov2021marina, li2021canita, richtarik2021ef21,fatkhullin2021ef21,zhao2021faster}.
%mishchenko2019distributed, li2020unified,horvath2019stochastic
In this paper, we will adopt the second approach based on communication compression to enhance communication efficiency.  

\paragraph{Decentralized nonconvex optimization with compression}

As discussed earlier and summarized in Table~\ref{tab:existed-results}, there have been limited existing works on decentralized nonconvex optimization with communication compression. In particular, \algname{SQuARM-SGD}~\citep{singh2021squarm} can be viewed as \algname{CHOCO-SGD} with momentum, but its theoretical convergence rate is slower than the original \algname{CHOCO-SGD}. \algname{Deepsqueeze}~\citep{tang2019deepsqueeze} and \algname{CHOCO-SGD} have a close relationship, where \algname{Deepsqueeze} can be viewed as decentralized SGD (\algname{DSGD} or \algname{D-PSGD}) with the explicit error feedback framework, and \algname{CHOCO-SGD} uses control variables to implicitly handle the compression error.

\paragraph{Notation} Throughout this paper, we use boldface letters to denote vectors, e.g., $\vx \in \R^d$. Let $[n]$ denote the set $\{1,2,\cdots,n\}$, $\vone$ be the all-one vector, $\mI$ be the identity matrix, $\norm{\cdot}$ denote the Euclidean norm of a vector, and $\norm{\cdot}_{\mathrm{F}}$ denote the Frobenius norm of a matrix.
Let $\dotp{\vu}{\vv}$ denote the standard Euclidean inner product of two vectors $\vu$ and $\vv$. In addition, we use the standard order notation $O(\cdot)$ to hide absolute constants.

\section{Problem Setup}\label{sec:setup}

In this section, we formally define the decentralized optimization problem with communication compression, and introduce a few important quantities and assumptions that will be used in developing our algorithm and theory. %the mixing matrix

\subsection{Decentralized optimization} 
The goal of decentralized optimization is to solve
\begin{equation*}
\min_{\vx\in\R^d}\left\{ f(\vx) := \frac{1}{n}\sum_{i=1}^n f_i(\vx)\right\},
\end{equation*}
where $n$ is the number of clients, $f(\vx)$ is the global objective function, and $f_i(\vx):=f(\vx; \gD_i) := \E_{\xi_i\sim\gD_i} f(\vx; \xi_i)$ is the local objective function, with $\vx$ the parameter of interest, $\xi_i$ a random data sample drawn from the local dataset $\gD_i$. 

In the decentralized setting, the clients can only communicate with their local neighbors over a prescribed network topology, which is specified by an undirected weighted graph $\mathcal{G}([n], E)$. Here, each node in $[n]$ represents a client, and $E$ is the set of possible communication links between different clients. Information sharing across the clients is implemented mathematically by the use of a mixing matrix $\mW =[w_{ij}] \in  [0,1]^{n\times n} $, which is defined in accordance with the network topology: we assign a positive weight $w_{ij}$ for any $(i,j)\in E$ and $w_{ij}=0$ for all $(i,j)\notin E$. We make the following standard assumption on the mixing matrix~\citep{nedic2018network}.
\begin{assumption}[Mixing matrix]\label{ass:mixing}
The mixing matrix $\mW=[ w_{ij} ]\in [0,1]^{n\times n}$ is symmetric ($\mW^\top = \mW$) and doubly stochastic ($\mW \vone = \vone, \vone^{\top}\mW = \vone^{\top}$). Let its eigenvalues be
    $1 = |\lambda_1(\mW)| > |\lambda_2(\mW)| \ge \cdots \ge |\lambda_n(\mW)|$. The spectral gap is denoted by 
    \begin{equation}\label{eq:rho}
    \rho := 1- |\lambda_2(\mW)| \in (0,1]. 
    \end{equation}
\end{assumption}
The spectral gap of a mixing matrix is closely related to the network topology, see~\citet{nedic2018network} for its scaling with respect to the network size (i.e. the number of clients $n$) for representative network topologies.

\subsection{Compression operators} 

Compression, in the forms of quantization or sparsification, can be used to reduce the total communication cost. We now introduce the notion of a randomized {\em compression operator}, which is widely used in the decentralized/federated optimization literature, e.g. \citet{tang2018communication,stich2018sparsified,koloskova2019decentralizeddeep,richtarik2021ef21,fatkhullin2021ef21}.
\begin{definition}[Compression operator]\label{def:comp}
	A randomized map $\gC: \R^d\mapsto \R^d$ is an $\alpha$-compression operator  if for all $\vx\in \R^d$, it satisfies
	\begin{equation}\label{eq:comp}
	\E\left[\norm{\gC(\vx)-\vx}^2\right]\leq (1-\alpha)\norm{\vx}^2.
	\end{equation}
	In particular, no compression ($\gC(\vx)\equiv \vx$) implies $\alpha=1$.
\end{definition}
Compared with the \emph{unbiased compression operator} used in, e.g.,~\citet{alistarh2017qsgd, khirirat2018distributed, mishchenko2019distributed,li2020unified}, the compression operator in Definition \ref{def:comp} does not impose the additional constraint on the expectation such that $\E [\gC(\vx)]=\vx$. Besides, it is always possible to convert an \emph{unbiased compression operator} into a biased one satisfying Definition~\ref{def:comp}. In particular, for an unbiased compression operator $\gC': \R^d\mapsto \R^d$ that satisfies
	\begin{equation*}
	\E[\gC'(\vx)]=\vx, \qquad \E\left[\norm{\gC'(\vx)-\vx}^2\right]\leq \omega\norm{\vx}^2,
	\end{equation*}
we can construct a \emph{biased} compression operator $\gC: \gC(\vx) = \frac{\gC'(\vx)}{1+\omega}$ and the new compression operator satisfies Definition \ref{def:comp} with $\alpha = \frac{1}{1+\omega}$. Thus, Definition \ref{def:comp} is a generalization of the {unbiased compression operator} that allows \emph{biased} compression.

\subsection{Assumptions on functions} 

We now state the assumptions on the functions $\{f_i\}$ and $f$. Throughout this paper, we assume that $f^* = \min_{\vx} f(\vx)$ exists and $f^* > -\infty$.

In the nonconvex setting, we assume that the functions $\{f_{i}\}_{i\in [n]}$ are arbitrary functions that satisfy the following standard smoothness assumption. 
\begin{assumption}[Smoothness]\label{ass:smooth}
	The function $f$ is $L$-smooth if there exists $L \ge 0$ such that
	\[\norm{ \nabla f(\vx_1) - \nabla f(\vx_2)} \le L\norm{\vx_1 - \vx_2}, \forall \vx_1, \vx_2\in \R^d.\]
\end{assumption}

In addition, we allow local computation to be performed via stochastic gradient updates, where
$\tilde\nabla f_i(\vx) := \nabla f_i(\vx;\xi_i)$
denotes a local stochastic gradient computed via a sample $\xi_i$ drawn i.i.d. from $\gD_i$, and $\tilde\nabla_b f_i(\vx) := \frac{1}{b}\sum_{j=1}^b \nabla f_i(\vx;\xi_{i,j})$ denotes the stochastic gradient computed by a minibatch with size $b$ drawn i.i.d. from $\gD_i$. We assume $\tilde\nabla f_i(\vx)$ and $\tilde\nabla_b f_i(\vx) $ have bounded variance, which is again standard in the decentralized/federated optimization literature~\citep{mcmahan2017communication,karimireddy2020scaffold,koloskova2019decentralizeddeep}.

\begin{assumption}[Bounded variance]\label{ass:bounded-variance}
	There exists a constant $\sigma \ge 0$ such that for all $i\in [n]$ and $\vx\in\R^d$,
	\[\E\norm{\tilde\nabla f_i(\vx) - \nabla f_i(\vx)}^2 \le \sigma^2.\]
 For a stochastic gradient with minibatch size $b$, we have
	\[\E\norm{\tilde\nabla_b f_i(\vx) - \nabla f_i(\vx)}^2 \le \frac{\sigma^2}{b}.\]
\end{assumption}

In addition, we consider the setting when the function $f$ additionally satisfies the following Polyak- \L ojasiewicz (PL) condition~\citep{polyak1963gradient}, which can lead to fast linear convergence even when the function is nonconvex.
\begin{assumption}[PL condition]\label{ass:pl}
    There exists some constant $\mu > 0$ such that for any $\vx\in\R^d$,
    \[\norm{\nabla f(\vx)}^2 \ge 2\mu(f(\vx) - f^*).\]
\end{assumption}
Note that the PL condition is a weaker assumption than strong convexity, which means that if the objective function $f$ is $\mu$-strongly convex, then the PL condition also holds with the parameter $\mu$.

\section{Proposed Algorithm}

In this section, we introduce our proposed algorithm \beer for decentralized nonconvex optimization with compressed communication. Before embarking on the description of \beer, we introduce some convenient matrix notation. Since in a decentralized setting, the parameter estimates at different clients are typically different, we use 
$\mX = [\vx_1,\vx_2,\dots,\vx_n]$ 
to denote the collection of parameter estimates from all clients, where $\vx_i$ is from client $i$. The average of $\{\vx_i\}_{i\in [n]}$ is denoted by $\bar\vx := \frac{1}{n}\mX\vone$. Other quantities are defined similarly. With slight abuse of notation, we define  
\begin{align*}
\nabla F(\mX) & := [\nabla f_1(\vx_1), \nabla f_2(\vx_2),\ldots,\nabla f_n(\vx_n)] \in \R^{d\times n},
\end{align*} 
which collects the local gradients computed at the local parameters. Similarly, the stochastic variant is defined as $\tilde\nabla_b F(\mX) := [\tilde\nabla_b f_1(\vx_1), \tilde\nabla_b f_2(\vx_2),\dots,\tilde\nabla_b f_n(\vx_n)]$. We also allow the compression operator to take vector values, which are applied in a column-wise fashion, i.e., $\gC(\mX) := [\gC(\vx_1), \ldots, \gC(\vx_n)]\in\R^{d\times n}$.

%\paragraph{Algorithm description} 
We now proceed to describe \beer, which is detailed in Algorithm~\ref{alg:becdo} using the matrix notations introduced above. At the $t$-th iteration, \beer maintains the current model estimates $\mX^t$ and the global gradient estimates $\mV^t$ across the clients. At the crux of its design, \beer also tracks and maintains two control sequences $\mH^t$ and $\mG^t$ that serve as compressed surrogates of $\mX^t$ and $\mV^t$, respectively. In particular, these two control sequences are updated by aggregating the received compressed messages alone (cf.~Line~\ref{line:update-h} and Line~\ref{line:update-g}).

It then boils down to how to carefully update these quantities in each iteration with communication compression. To begin, note that for each client $i$, \beer not only maintains its own parameters $\{\vx
_i^t, \vv_i^t, \vh_i^t, \vg_i^t\}$, but also the control variables from its neighbors, namely, $\{\vh_j^t\}_{j\in\gN(i)}$ and $\{\vg_j^t\}_{j\in \gN(i)}$.

\paragraph{Update the model estimate} Each client $i$ first updates its model $\vx^{t+1}_i$ according to Line~\ref{line:update-x}. By thinking of $\{\vh_j^t\}_{j\in\gN(i)}$ as a surrogate of $\{\vx_j^t\}_{j\in\gN(i)}$, the second term aims to achieve better consensus among the clients through mixing, while the last term performs a gradient descent update. 

\paragraph{Update the global gradient estimate} Each client $i$ updates the global gradient estimate $\vv_i^{t+1}$ according to Line~\ref{line:update-v}, where the last correction term---based on the difference of the gradients at consecutive models---is known as a trick called \emph{gradient tracking}~\citep{qu2017harnessing,di2016next,nedic2017achieving}. The use of gradient tracking is critical: as shall be seen momentarily, it contributes to the key difference from \algname{CHOCO-SGD} that enables the fast rate of $O(1/T)$ without any bounded dissimilarity or bounded gradient assumptions. Indeed, if we remove the control sequence $\mG^t$ and substitute Lines \ref{line:update-v}-\ref{line:update-g} by $\mV^{t+1} = \tilde \nabla_b F(\mX^{t+1})$, we recover \algname{CHOCO-SGD} from \beer.

\paragraph{Update the compressed surrogates with communication} To update $\{\vh_j^t\}_{j\in\gN(i)}$, each client $i$ first computes a compressed message $\vq_{h,i}^{t+1}$ that encodes the difference $\vx^{t+1}_i-\vh_i^{t}$, and broadcasts to its neighbors (cf.~Line~\ref{line:compress-x}). Then, each client $i$ updates $\{\vh_j^t\}_{j\in\gN(i)}$ by aggregating the received compressed messages $\{\vq_{h,j}^{t+1}\}_{j\in\gN(i)}$ following Line~\ref{line:update-h}. The updates of $\{\vg_j^t\}_{j\in\gN(i)}$ can be performed similarly. Moreover, all the compressed messages can be sent in a single communication round at one iteration, i.e., the communications in Lines~\ref{line:compress-x} and ~\ref{line:compress-v} can be performed at once. This leverages \algname{EF21}~\citep{richtarik2021ef21} for communication compression, which is a \emph{better and simpler} algorithm that deals with {biased} compression operators compared with the error feedback (or error compensation, \algname{EF/EC}) framework~\citep{karimireddy2019error,stich2020error}. Using the control sequence $\mG^t$, \beer does not need to apply \algname{EF/EC} explicitly and can deal with the error implicitly.

\begin{algorithm*}[!t]
\caption{\beer: Better comprEssion for dEcentRalized optimization}
\label{alg:becdo}
\begin{algorithmic}[1]
    \STATE {\bfseries Input:} Initial point $ \mX^0= \vx_0\vone^{\top}$, $\mG^0 = \mathbf{0}$, $\mH^0 = \mathbf{0}$, $\mV^0 =\nabla F(\mX_0)$, step size $\eta$, mixing step size $\gamma$, minibatch size $b$. \\
    \FOR{$t=0,1,\dots$}
        \STATE $\mX^{t+1} = \mX^t + \gamma \mH^t (\mW - \mI)- \eta \mV^t$ \label{line:update-x}
        \STATE $\mQ_h^{t+1} = \gC(\mX^{t+1} - \mH^{t})$ \COMMENT{client $i$ sends $\vq_{h,i}^{t+1}$ to all its neighbors} \label{line:compress-x}
        \STATE $\mH^{t+1} = \mH^t + \mQ_h^{t+1}$ \label{line:update-h}
        \STATE $\mV^{t+1} = \mV^t + \gamma \mG^{t} (\mW - \mI) + \tilde\nabla_b F(\mX^{t+1}) - \tilde\nabla_b F(\mX^{t})$ \label{line:update-v}
        \STATE $\mQ_g^{t+1} = \gC(\mV^{t+1} - \mG^{t})$ \COMMENT{client $i$ sends $\vq_{g,i}^{t+1}$ to all its neighbors} \label{line:compress-v}
        \STATE $\mG^{t+1} = \mG^t + \mQ_g^{t+1}$ \label{line:update-g}
    \ENDFOR
\end{algorithmic}
\end{algorithm*}

\section{Convergence Guarantees}

In this section, we show the convergence guarantees of \beer under different settings: the $O(1/T)$ rate in the nonconvex setting in Section \ref{sec:nonconvex}, and the improved linear rate under the PL condition (Assumption \ref{ass:pl}) in Section \ref{sec:pl}. In Section \ref{sec:proof-sketch}, we briefly sketch the proof.

Our convergence guarantees are based on an appropriately designed Lyapunov function, given by
      \begin{equation}\label{eq:lf}
        \Phi_t = \E f(\bar\vx^t) - f^* + \frac{c_1 L}{n}\Omega_1^t + \frac{c_2 \rho^2}{n L}\Omega_2^t + \frac{c_3 L}{n}\Omega_3^t + \frac{c_4 \rho^4}{nL}\Omega_4^t,
    \end{equation}
where the choice of constants $\{c_i\}_{i=1}^4$ might be different from theorem to theorem, $\E f(\bar\vx^t) - f^* $ represents the sub-optimality gap, and the errors $\{\Omega_i^t\}_{i=1}^4$ are defined by
\begin{align}
\Omega_1^t & := \E\normF{\mH^t - \mX^t}^2,  \quad ~~
\Omega_2^t  := \E\normF{\mG^t - \mV^t}^2,  \label{eq:defn-omega} \\
\Omega_3^t & := \E\normF{\mX^t - \bar\vx^t\vone^\top}^2, \quad 
\Omega_4^t  := \E\normF{\mV^t - \bar\vv^t\vone^\top}^2.   \nonumber
%\Omega_5^t & := \E\norm{\bar\vv^t}^2.\nonumber 
\end{align}
Here, $\Omega_1^t$ and $\Omega_2^t$ denote the compression errors for $\mX^t$ and $\mV^t$ when approximated using the compressed surrogates $\mH^t$ and $\mG^t$ respectively, and $\Omega_3^t$ and $\Omega_4^t$ denote the consensus errors of $\mX^t$ and $\mV^t$. 
%We will illustrate these terms more in the proof sketch section (Section~\ref{sec:proof-sketch}). 

\subsection{Convergence in the nonconvex setting}\label{sec:nonconvex}

First, we present the following convergence result of \beer in the nonconvex setting when there is no local variance ($\sigma^2 = 0$), i.e., we can use the local full gradient $\nabla F(\mX^t)$ instead of  $\tilde\nabla_b F(\mX^t)$ in Line~\ref{line:update-v} of Algorithm~\ref{alg:becdo}.

\begin{restatable}[Convergence in the nonconvex setting without local variance]{theorem}{thmnonconvexfull}\label{thm:nonconvex-full}
    Suppose Assumptions \ref{ass:mixing}, and \ref{ass:smooth} hold, and we can compute the local full gradient $\nabla f_i(\vx)$ for any $\vx$. Then there exist absolute constants $c_1 ,c_2 ,c_3 ,c_4 ,c_{\gamma},c_{\eta}>0$, such that if we set $\gamma = c_{\gamma}\alpha\rho$, $\eta = c_{\eta}\gamma\rho^2/L$, then for the Lyapunov function $\Phi_t$ in \eqref{eq:lf}, it holds
    \[\frac{1}{T}\sum_{t=0}^{T-1} \E \norm{\nabla f(\bar\vx^t)}^2 \le \frac{2(\Phi_0 - \Phi_T)}{\eta T}.\]
\end{restatable}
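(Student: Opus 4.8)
The plan is to reduce the whole theorem to a single one-step inequality for the Lyapunov function in \eqref{eq:lf}, namely $\Phi_{t+1} \le \Phi_t - \frac{\eta}{2}\E\norm{\nabla f(\bar\vx^t)}^2$. Once this is in hand, summing over $t=0,\dots,T-1$ and dividing by $\eta T/2$ yields the stated bound directly (the telescoping collapses the $\Phi$ terms to $\Phi_0-\Phi_T$). All the difficulty therefore lives in establishing this descent, which amounts to tracking how each of the five pieces of $\Phi_t$ evolves and then combining the five recursions with a judicious choice of the weights $\{c_i\}$ and step sizes.

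The structural fact that powers everything—and that lets us drop the bounded-gradient/dissimilarity assumptions—is the gradient-tracking invariant. Right-multiplying Lines~\ref{line:update-x} and~\ref{line:update-v} by $\vone/n$ and using $(\mW-\mI)\vone=\vzero$ gives the averaged dynamics $\bar\vx^{t+1}=\bar\vx^t-\eta\bar\vv^t$ and $\bar\vv^{t+1}=\bar\vv^t+\frac1n(\nabla F(\mX^{t+1})-\nabla F(\mX^t))\vone$; together with $\mV^0=\nabla F(\mX^0)$ this telescopes to $\bar\vv^t=\frac1n\sum_{i=1}^n\nabla f_i(\vx_i^t)$. Hence $\bar\vv^t-\nabla f(\bar\vx^t)=\frac1n\sum_i(\nabla f_i(\vx_i^t)-\nabla f_i(\bar\vx^t))$, whose squared norm is at most $\frac{L^2}{n}\Omega_3^t$ by $L$-smoothness—no dissimilarity term appears, only the consensus error.

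I would then derive five recursions. \textbf{(i)} Objective descent: inserting $\bar\vx^{t+1}-\bar\vx^t=-\eta\bar\vv^t$ into smoothness, splitting $\bar\vv^t=\nabla f(\bar\vx^t)+(\bar\vv^t-\nabla f(\bar\vx^t))$, and applying Young's inequality produces the crucial $-\frac{\eta}{2}\E\norm{\nabla f(\bar\vx^t)}^2$ plus positive multiples of $\Omega_3^t$ and of $\E\norm{\bar\vv^t}^2\lesssim D_t+\frac{L^2}{n}\Omega_3^t$, where $D_t:=\E\norm{\nabla f(\bar\vx^t)}^2$. \textbf{(ii)--(iii)} Compression errors: the \algname{EF21} identity $\mH^{t+1}-\mX^{t+1}=\gC(\mX^{t+1}-\mH^t)-(\mX^{t+1}-\mH^t)$ with Definition~\ref{def:comp} gives $\E\normF{\mH^{t+1}-\mX^{t+1}}^2\le(1-\alpha)\E\normF{\mX^{t+1}-\mH^t}^2$; expanding $\mX^{t+1}-\mH^t=(\mX^t-\mH^t)+(\mX^{t+1}-\mX^t)$ with a Young parameter tuned to $\alpha$ yields contraction by $(1-\frac{\alpha}{2})$ at the cost of $O(\frac1\alpha)$ times the increment $I_X^t:=\E\normF{\mX^{t+1}-\mX^t}^2$ (symmetrically for $\Omega_2$ with $I_V^t$). \textbf{(iv)--(v)} Consensus errors: writing $\mX^{t+1}(\mI-\mJ)$ with $\mJ:=\frac1n\vone\vone^\top$ and factoring the mixing as $\mX^t(\mI-\mJ)[\mI+\gamma(\mW-\mI)]$, the operator $\mI+\gamma(\mW-\mI)$ contracts the orthogonal complement of $\vone$ with norm $1-\gamma\rho$ (valid for $\gamma\le1$ by Assumption~\ref{ass:mixing}); Young's inequality then gives contraction by $1-\gamma\rho$ while absorbing $O(\frac{\gamma}{\rho})\Omega_1^t$ and $O(\frac{\eta^2}{\gamma\rho})\Omega_4^t$ into $\Omega_3$, and $O(\frac{\gamma}{\rho})\Omega_2^t$, $O(\frac{L^2}{\gamma\rho})I_X^t$ into $\Omega_4$. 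Finally I bound $I_X^t,I_V^t$ from the update rules in terms of $\Omega_1^t,\Omega_2^t,\Omega_3^t,\Omega_4^t$ and $D_t$.

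The main obstacle is the assembly: forming $\Phi_t$ with the weights $\frac{c_1L}{n},\frac{c_2\rho^2}{nL},\frac{c_3L}{n},\frac{c_4\rho^4}{nL}$ and verifying that every positive cross-term is dominated by the negative contraction of the matching component. This is exactly where the powers of $\rho$ and the factors of $L$ in the weights, and the scalings $\gamma=c_\gamma\alpha\rho$ and $\eta=c_\eta\gamma\rho^2/L$, are forced: compression terms contract at rate $\sim\alpha$ and consensus terms at rate $\sim\gamma\rho$, so one must check that the increment-induced couplings—for instance the $\eta^2$-part of $I_X^t$ that re-enters $\Omega_1$ and, through $I_X^t$, $\Omega_4$—sit below these budgets by the right powers of $\rho$, which is precisely what pins down $\eta\sim\gamma\rho^2/L$. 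Carrying out this simultaneous balancing across all five recursions, so that the net coefficient of each $\Omega_i^t$ is nonpositive and the coefficient of $D_t$ is $\le-\frac{\eta}{2}$, is the bulk of the technical work; once it closes, the telescoping argument completes the proof.
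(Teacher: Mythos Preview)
Your proposal is correct and follows essentially the same architecture as the paper: gradient-tracking invariant (the paper's Lemma~\ref{lem:gt}), recursive contraction inequalities for $\Omega_1^t,\dots,\Omega_4^t$ (the paper's Lemma~\ref{lem:omega-1}), a descent inequality for $\E f(\bar\vx^t)$, assembly into the Lyapunov function, and a feasibility check on the constants under $\gamma=c_\gamma\alpha\rho$, $\eta=c_\eta\gamma\rho^2/L$.

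The one bookkeeping difference is your treatment of $\E\norm{\bar\vv^t}^2$: you immediately bound it by $2D_t+\frac{2L^2}{n}\Omega_3^t$ and carry $D_t$ through all five recursions, whereas the paper keeps $\Omega_5^t:=\E\norm{\bar\vv^t}^2$ as a separate fifth quantity and exploits that the descent step actually produces a \emph{negative} coefficient $-(\frac{\eta}{2}-\frac{\eta^2L}{2})$ on $\Omega_5^t$, which then directly absorbs the positive $\Omega_5^t$ contributions arising in the $\Omega_1$--$\Omega_4$ recursions (packaged as the vector $\vb_1$ in the paper's linear-system formulation). Both routes close; the paper's version avoids feeding extra positive mass back into the $D_t$ column and so makes the final feasibility system slightly cleaner, but your variant works for sufficiently small $c_\eta,c_\gamma$ since those extra $D_t$ terms are all $O(\eta^2)$.
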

Theorem~\ref{thm:nonconvex-full} shows that \beer converges at a rate of $O(1/T)$ when there is no local variance ($\sigma^2 = 0$), which is faster than the $O(1/T^{2/3})$ rate by \algname{CHOCO-SGD}~\citep{koloskova2019decentralizeddeep} and \algname{DeepSqueeze}~\citep{tang2019deepsqueeze}, and the $O(1/\sqrt{T})$ rate by \algname{SQuARM-SGD}~\citep{singh2021squarm}; see also Table~\ref{tab:existed-results}. 

More specifically, to achieve $\frac{1}{T}\sum_{t=0}^{T-1} \E \norm{\nabla f(\bar\vx^t)}^2 \leq \epsilon^2$, \beer needs 
$$O\left( \frac{1}{\rho^3\alpha\epsilon^2}\right)$$ 
iterations or communication rounds, where $\rho$ and $\alpha$ are the spectral gap (cf.~\eqref{eq:rho}) and the compression parameter (cf.~\eqref{eq:comp}), respectively. In comparison, the state-of-the-art algorithm \algname{CHOCO-SGD} \citep{koloskova2019decentralizeddeep} converges at a rate of $O((G/\rho^2\alpha T)^{2/3})$, which translates to an iteration complexity of $O\left(\frac{G}{\rho^2\alpha\epsilon^3}\right)$, with $G$ being the bounded gradient parameter, namely, $\E_{\xi_i\sim \gD_i}\norm{\nabla f(\vx, \xi_i)}^2 \le G^2$. Therefore, \beer improves over \algname{CHOCO-SGD} not only in terms of a better dependency on $\epsilon$, but also removing the bounded gradient assumption, which is significant since in practice, $G$ can be excessively large due to data heterogeneity across the clients. 

The dependency on $\alpha$ of \beer is consistent with other compression schemes, such as \algname{CHOCO-SGD}, \algname{DeepSqueeze} and \algname{SQuARM-SGD} for the nonconvex setting, as well as \algname{LEAD}~\citep{liu2020linear} and \algname{EF-C-GT}~\citep{liao2021compressed} for the strongly convex setting.
 
 As for the dependency on $\rho$, \beer is slightly worse than \algname{CHOCO-SGD}, where \algname{CHOCO-SGD} has a dependency of $O(1/\rho^2)$ whereas \beer has a dependency of $O(1/\rho^3)$. This degeneration is also seen in the analysis of uncompressed decentralized algorithms using gradient tracking \citep{sun2020improving,xin2020fast}, where the rate $O(1/\rho^2)$ is worse than the rate of $O(1/\rho)$ for basic decentralized SGD algorithms~\citep{kempe2003gossip,lian2017can} by a factor of $\rho$. In addition, both \beer and \algname{CHOCO-SGD} use small mixing step size $\gamma$ to guarantee convergence, which makes the dependency on $\rho$ worse than their uncompressed counterparts.

\paragraph{Stochastic gradient oracles} \beer also supports the use of stochastic gradient oracles with bounded local variance (Assumption \ref{ass:bounded-variance}). More specifically, we have the following theorem, which generalizes Theorem \ref{thm:nonconvex-full}.

\begin{restatable}[Convergence in the nonconvex setting]{theorem}{thmnonconvex}\label{thm:nonconvex}
    Suppose Assumptions \ref{ass:mixing}, \ref{ass:smooth} and \ref{ass:bounded-variance} hold. Then there exist absolute constants $c_1 ,c_2 ,c_3 ,c_4 ,c_{\gamma},c_{\eta}>0$, such that if we set $\gamma = c_{\gamma}\alpha\rho$, $\eta = c_{\eta}\gamma\rho^2/L$, then for the Lyapunov function $\Phi_t$ in \eqref{eq:lf}, it holds
    \[\frac{1}{T}\sum_{t=0}^{T-1} \E \norm{\nabla f(\bar\vx^t)}^2 \le \frac{2(\Phi_0 - \Phi_T)}{\eta T} + \frac{36c_4 \sigma^2}{c_{\gamma} b\alpha L}.\]
\end{restatable}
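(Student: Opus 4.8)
The plan is to run a one-step Lyapunov descent on $\Phi_t$ from \eqref{eq:lf}, essentially re-deriving the argument behind Theorem~\ref{thm:nonconvex-full} while carrying along the extra terms produced by replacing the full gradient $\nabla F$ with the minibatch estimate $\tilde\nabla_b F$. The starting observation is that both the model average and the gradient-tracking variable collapse nicely under $\vone$: since $\mW\vone = \vone$ we have $(\mW - \mI)\vone = \vzero$, so Line~\ref{line:update-x} gives $\bar\vx^{t+1} = \bar\vx^t - \eta\bar\vv^t$, while telescoping Line~\ref{line:update-v} (with the matched initialization $\mV^0 = \tilde\nabla_b F(\mX^0)$) yields $\bar\vv^t = \frac{1}{n}\tilde\nabla_b F(\mX^t)\vone = \frac{1}{n}\sum_{i} \tilde\nabla_b f_i(\vx_i^t)$. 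Conditioned on $\mX^t$, this is an unbiased estimate of $\frac{1}{n}\sum_i \nabla f_i(\vx_i^t)$ with variance at most $\frac{\sigma^2}{nb}$ by Assumption~\ref{ass:bounded-variance} and independence across clients; this is the first place the variance enters, through the smoothness-based descent of $\E f(\bar\vx^t) - f^*$.

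Next I would establish the four coupled error recursions, exactly as in the full-gradient proof but tracking the fresh stochasticity. The compression errors $\Omega_1^{t+1}, \Omega_2^{t+1}$ contract by the factor $(1-\alpha)$ from Definition~\ref{def:comp} (applied to $\mX^{t+1} - \mH^t$ and $\mV^{t+1} - \mG^t$, which are exactly the arguments compressed in Lines~\ref{line:compress-x} and~\ref{line:compress-v}), after which a Young's inequality splits off the increments $\normF{\mX^{t+1} - \mX^t}^2$ and $\normF{\mV^{t+1} - \mV^t}^2$. The consensus errors $\Omega_3^{t+1}, \Omega_4^{t+1}$ contract by a factor governed by the spectral gap $\rho$ from Assumption~\ref{ass:mixing}, since mixing by $\mW$ shrinks the component orthogonal to $\vone$. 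The crucial new contribution lives in the $\Omega_4$ recursion: the increment $\normF{\mV^{t+1} - \mV^t}^2$ contains $\tilde\nabla_b F(\mX^{t+1}) - \tilde\nabla_b F(\mX^t)$, whose stochastic part contributes a term of order $\frac{n\sigma^2}{b}$ to the Frobenius norm; after the $1/n$ normalization in the Lyapunov weight $\frac{c_4\rho^4}{nL}\Omega_4^t$ this is the term that ultimately surfaces as the additive variance floor, and it is the reason the floor scales like $\frac{\sigma^2}{b}$ rather than $\frac{\sigma^2}{nb}$.

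Then I would assemble the weighted sum with coefficients $c_1,\dots,c_4$ and impose the stated step sizes $\gamma = c_\gamma\alpha\rho$, $\eta = c_\eta\gamma\rho^2/L$. The target is a clean descent of the form $\Phi_{t+1} \le \Phi_t - \frac{\eta}{2}\E\norm{\nabla f(\bar\vx^t)}^2 + \frac{18 c_4\sigma^2}{c_\gamma b\alpha L}\,\eta$, with all deterministic cross terms absorbed; telescoping over $t=0,\dots,T-1$ and dividing by $\eta T/2$ then gives the claimed bound directly, keeping the $\Phi_0-\Phi_T$ term and producing the floor $\frac{36 c_4\sigma^2}{c_\gamma b\alpha L}$. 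The main obstacle is exactly the one already driving Theorem~\ref{thm:nonconvex-full}: the four errors are mutually coupled (each increment feeds the others through $\mX^{t+1}-\mX^t$ and $\mV^{t+1}-\mV^t$), and one must verify that a single choice of the absolute constants $c_1,\dots,c_4,c_\gamma,c_\eta$ simultaneously makes every contraction dominate the cross terms it generates while leaving a strictly negative coefficient on $\E\norm{\nabla f(\bar\vx^t)}^2$. The calibration $\eta\sim\alpha\rho^3/L$ is precisely what balances these; relative to the full-gradient case the only genuinely new bookkeeping is to route the $\frac{\sigma^2}{b}$-terms through the $\Omega_4$ recursion (and, to lower order, the function-value descent) and confirm they accumulate into the single stated variance term rather than compounding across iterations.
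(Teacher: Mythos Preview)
Your proposal is correct and follows essentially the same route as the paper: the paper likewise proves the gradient-tracking identity $\bar\vv^t=\tfrac1n\tilde\nabla_bF(\mX^t)\vone$, derives coupled recursions for $\Omega_1^t,\ldots,\Omega_4^t$, combines them with the $L$-smoothness descent of $\E f(\bar\vx^t)$ into the Lyapunov function, and verifies feasibility of the constants $c_1,\ldots,c_4,c_\gamma,c_\eta$ via a single matrix inequality before telescoping. One small addendum: the stochastic term $\tilde\nabla_bF(\mX^{t+1})-\tilde\nabla_bF(\mX^t)$ also enters the $\Omega_2$ recursion (since $\Omega_2^{t+1}$ is bounded through $\normF{\mV^{t+1}-\mV^t}^2$ just like $\Omega_4^{t+1}$), so a second $n\sigma^2/b$ term appears there as well---it is handled identically and is dominated by the $\Omega_4$ contribution in the final floor.
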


In the presence of local variance, the squared gradient norm of \beer has an additional term that scales on the order of $O\left(\frac{\sigma^2}{\alpha b} \right)$ (ignoring other parameters). By choosing a sufficiently large minibatch size $b$, i.e. $b \ge O\left(\frac{\sigma^2}{\alpha \epsilon^2} \right)$, \beer maintains the iteration complexity
    \[O\left(\frac{1}{\rho^3\alpha\epsilon^2}\right)\]
to reach $\frac{1}{T}\sum_{t=0}^{T-1} \E \norm{\nabla f(\bar\vx^t)}^2 \leq \epsilon^2$, without the bounded gradient assumption, thus inheriting similar advantages over \algname{CHOCO-SGD} as discussed earlier. In terms of local computation, the gradient oracle complexity on a single client of \beer is
 \[O\left(\frac{1}{\rho^3\alpha\epsilon^2} + \frac{\sigma^2}{\rho^3\alpha^2 \epsilon^4}\right).\]

While our focus is on communication efficiency, to gain more insights, Table \ref{tab:detailed-comparison} summarizes both the communication rounds and the gradient complexity for different decentralized stochastic methods. While \beer does not require the bounded gradient assumption, it may lead to a worse gradient complexity in the data homogeneous setting due to the use of large minibatch size. Fortunately, this only impacts the local computation cost, and does not exacerbate the communication complexity, which is often the bottleneck. It is of great interest to further refine the design and analysis of \beer in terms of the gradient complexity.

\begin{table*}[!t]
\renewcommand{\arraystretch}{1.5}
    \centering
 
    \begin{tabular}{|c|c|c| }
    \hline
    \bf Algorithm & \makecell{\bf Communication rounds}  & \makecell{\bf Gradient complexity}    \\
    \hline
        \makecell{\algname{SQuARM-SGD}  \citep{singh2021squarm}} & $O\left(\frac{nG^2}{\epsilon^2}+\frac{\sigma^2}{bn\epsilon^4}\right)$ & $O\left(\frac{\sigma^2}{n\epsilon^4}+\frac{nG^2}{\epsilon^2}\right)$    \\ \hline
        \makecell{\algname{DeepSqueeze}   \citep{tang2019deepsqueeze}} & $O\left(\frac{G}{\epsilon^3}+\frac{\sigma^2}{bn\epsilon^4}\right)$ & $O\left(\frac{\sigma^2}{n\epsilon^4}+\frac{G}{\epsilon^3}\right)$       \\ \hline
        \makecell{\algname{CHOCO-SGD}    \citep{koloskova2019decentralizeddeep}} & $O\left(\frac{G}{\epsilon^3}+\frac{\sigma^2}{bn\epsilon^4}\right)$ & $O\left(\frac{\sigma^2}{n\epsilon^4}+\frac{G}{\epsilon^3}\right)$  \\ \hline
        \cellcolor{bgcolor} \beer (Algorithm \ref{alg:becdo})   \ \ \ \  & \cellcolor{bgcolor} $O\left(\frac{1}{\epsilon^2}\right)$ & \cellcolor{bgcolor} $O\left(\frac{\sigma^2}{\epsilon^4}+\frac{1}{\epsilon^2}\right)$     \\ \hline
    \end{tabular}
     \caption{A more detailed comparison of the communication complexity and the gradient complexity with existing decentralized stochastic methods in the nonconvex setting to reach $\epsilon$-accuracy. Here, $G$ again measures the bounded gradient or bounded dissimilarity assumption,
 $\sigma^2$ and $b$ denote the gradient variance and batch size respectively. We omit the dependency on the compression ratio and the network topology parameter for brevity.}
     \label{tab:detailed-comparison}
\end{table*}

\subsection{Linear convergence with PL condition}\label{sec:pl}

Now, we show that the convergence of \beer can be strengthened to a linear rate with the addition of the PL condition (Assumption \ref{ass:pl}). Similar to the nonconvex setting, we first show the convergence result without local gradient variance ($\sigma^2 = 0$).

\begin{restatable}[Convergence under the PL condition without local variance]{theorem}{thmplfull}\label{thm:pl-full}
    Suppose Assumptions \ref{ass:mixing}, \ref{ass:smooth}, and \ref{ass:pl} hold, and we can compute the local full gradient $\nabla f_i(\vx)$ for any $\vx$. Then there exist constants $c_1 ,c_2 ,c_3 ,c_4 ,c_{\gamma},c_{\eta}>0$, such that if we set $\gamma = c_{\gamma}\alpha\rho$, $\eta = c_{\eta}\gamma\rho^2/L$, then for the Lyapunov function $\Phi_t$ in \eqref{eq:lf}, it holds
    \[\Phi_T \le (1-\mu\eta)^T \Phi_0.\]
\end{restatable}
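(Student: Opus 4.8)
The plan is to prove a one-step contraction $\Phi_{t+1} \le (1-\mu\eta)\Phi_t$ for the Lyapunov function in \eqref{eq:lf}, after which the claim follows immediately by iterating over $t = 0,\dots,T-1$. Most of the per-step work—bounding each of the five pieces of $\Phi_t$ after one update of Algorithm~\ref{alg:becdo}—is shared verbatim with the nonconvex analysis behind Theorem~\ref{thm:nonconvex-full}; the PL condition (Assumption~\ref{ass:pl}) is precisely what upgrades the telescoping bound used there into geometric decay here.

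First I would record the two averaging identities the algorithm enforces. Since $\mW$ is doubly stochastic, $(\mW-\mI)\vone = \vzero$, so right-multiplying Line~\ref{line:update-x} and Line~\ref{line:update-v} by $\frac{1}{n}\vone$ kills the mixing terms and yields $\bar\vx^{t+1} = \bar\vx^t - \eta\bar\vv^t$ together with the gradient-tracking identity $\bar\vv^t = \frac{1}{n}\sum_{i=1}^n \nabla f_i(\vx_i^t)$ (using $\mV^0=\nabla F(\mX^0)$ and induction). Applying $L$-smoothness (Assumption~\ref{ass:smooth}) to $f(\bar\vx^{t+1}) = f(\bar\vx^t-\eta\bar\vv^t)$ gives
\[\E f(\bar\vx^{t+1}) \le \E f(\bar\vx^t) - \eta\E\dotp{\nabla f(\bar\vx^t)}{\bar\vv^t} + \frac{L\eta^2}{2}\E\norm{\bar\vv^t}^2.\]
I would then split $\bar\vv^t = \nabla f(\bar\vx^t) + (\bar\vv^t - \nabla f(\bar\vx^t))$, bound the deviation $\bar\vv^t-\nabla f(\bar\vx^t) = \frac{1}{n}\sum_i(\nabla f_i(\vx_i^t)-\nabla f_i(\bar\vx^t))$ by $O(L^2/n)\,\Omega_3^t$ via smoothness, and apply Young's inequality to the cross term to isolate a clean $-\frac{\eta}{2}\E\norm{\nabla f(\bar\vx^t)}^2$ plus an $O(\eta L^2/n)\,\Omega_3^t$ error. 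The PL condition then converts $-\frac{\eta}{2}\E\norm{\nabla f(\bar\vx^t)}^2 \le -\mu\eta(\E f(\bar\vx^t)-f^*)$, contracting the sub-optimality component of $\Phi_t$.

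The technical core is the coupled system of one-step recursions for the four error sequences. For the compression errors, the \algname{EF21} structure gives $\mH^{t+1}-\mX^{t+1} = \gC(\mX^{t+1}-\mH^t)-(\mX^{t+1}-\mH^t)$, so Definition~\ref{def:comp} yields $\Omega_1^{t+1} \le (1-\alpha)\E\normF{\mX^{t+1}-\mH^t}^2$, and expanding $\mX^{t+1}-\mH^t = (\mX^t-\mH^t)+(\mX^{t+1}-\mX^t)$ with Young's inequality produces a $(1-\alpha/2)\Omega_1^t$ contraction plus a term $\propto \E\normF{\mX^{t+1}-\mX^t}^2$; the analogue holds for $\Omega_2^{t+1}$. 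For the consensus errors I would substitute $\mH^t = \mX^t + (\mH^t-\mX^t)$ into Line~\ref{line:update-x} to rewrite the update as $\mX^{t+1} = \mX^t(\mI+\gamma(\mW-\mI)) - \eta\mV^t + \gamma(\mH^t-\mX^t)(\mW-\mI)$; the matrix $\mI+\gamma(\mW-\mI)$ mixes with spectral gap $\gamma\rho$ on $\vone^{\perp}$ (Assumption~\ref{ass:mixing}), so the consensus error contracts at rate $(1-\gamma\rho)$ up to couplings $\propto \gamma^2\Omega_1^t$ and $\propto \eta^2\E\normF{\mV^t}^2$. The same manipulation on Line~\ref{line:update-v} gives a $(1-\gamma\rho)$ contraction for $\Omega_4$, with couplings to $\gamma^2\Omega_2^t$ and to the gradient increment $\E\normF{\nabla F(\mX^{t+1})-\nabla F(\mX^t)}^2 \le L^2\E\normF{\mX^{t+1}-\mX^t}^2$. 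Throughout, $\E\normF{\mX^{t+1}-\mX^t}^2$ is bounded by $O(\gamma^2)(\Omega_1^t+\Omega_3^t) + O(\eta^2)\E\normF{\mV^t}^2$, and $\E\normF{\mV^t}^2$ is controlled by $\Omega_4^t$ together with $n\E\norm{\nabla f(\bar\vx^t)}^2$ and $L^2\Omega_3^t$.

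The final step is bookkeeping: I would add the four recursions with the weights $\frac{c_1L}{n},\frac{c_2\rho^2}{nL},\frac{c_3L}{n},\frac{c_4\rho^4}{nL}$ of \eqref{eq:lf} and choose the absolute constants $c_1,\dots,c_4$ along with $\gamma=c_{\gamma}\alpha\rho$ and $\eta=c_{\eta}\gamma\rho^2/L$ so that every cross-coupling coefficient is dominated by the diagonal contraction that absorbs it, leaving a combined bound of the form $\Phi_{t+1} \le \Phi_t - \mu\eta(\E f(\bar\vx^t)-f^*) - \sum_{i}(\text{positive multiple of }\Omega_i^t)$. The main obstacle is exactly this balancing act, because the gradient-tracking feedback couples $\Omega_4$ back through $\E\normF{\mX^{t+1}-\mX^t}^2$ into the very step defining $\mX^{t+1}$; the small step sizes—the extra $\rho^2$ factor in $\eta$ and the $\alpha\rho$ scaling of $\gamma$—are what make the system contractive and keep the $\eta^2\E\normF{\mV^t}^2$ feedback subdominant. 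To conclude I would check that each surviving contraction rate exceeds $\mu\eta$: the compression terms contract at rate $\alpha/2$ and the consensus terms at rate $\gamma\rho$, while $\mu\eta = c_{\eta}\gamma\rho^2\mu/L \le c_{\eta}\gamma\rho \le \gamma\rho \le \alpha/2$ for small constants and $\mu\le L$, so $\Phi_{t+1}\le(1-\mu\eta)\Phi_t$, and iterating gives $\Phi_T \le (1-\mu\eta)^T\Phi_0$.
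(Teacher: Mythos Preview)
Your proposal is correct and follows essentially the same approach as the paper: establish a one-step contraction $\Phi_{t+1}\le(1-\mu\eta)\Phi_t$ by combining the smoothness descent inequality (upgraded via the PL condition), the gradient-tracking identity $\bar\vv^t=\frac{1}{n}\nabla F(\mX^t)\vone$, and the coupled recursions for the compression and consensus errors $\Omega_1^t,\dots,\Omega_4^t$, then verify that suitable Lyapunov weights $c_1,\dots,c_4$ and step sizes $\gamma=c_\gamma\alpha\rho$, $\eta=c_\eta\gamma\rho^2/L$ make every cross-coupling subdominant. The one notable difference is that the paper carries $\Omega_5^t:=\E\norm{\bar\vv^t}^2$ as a separate auxiliary quantity: by using the polarization identity $2\langle a,b\rangle=\norm{a}^2+\norm{b}^2-\norm{a-b}^2$ rather than Young's inequality in the descent step, it isolates \emph{both} $-\frac{\eta}{2}\E\norm{\nabla f(\bar\vx^t)}^2$ (consumed entirely by PL) \emph{and} $-\big(\frac{\eta}{2}-\frac{\eta^2L}{2}\big)\Omega_5^t$ (reserved to absorb the $\eta^2\normF{\mV^t}^2$ feedback from the error recursions), and then packages the whole feasibility check as a single matrix inequality in $(c_1,\dots,c_4)$. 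Your route---bounding $\normF{\mV^t}^2$ by $\Omega_4^t+n\E\norm{\nabla f(\bar\vx^t)}^2+L^2\Omega_3^t$ and letting the gradient-norm term absorb both roles---also works, but forces you to spend part of the $-\frac{\eta}{2}\E\norm{\nabla f(\bar\vx^t)}^2$ budget on feedback before invoking PL, so you should be prepared to recover the exact factor $(1-\mu\eta)$ rather than $(1-c\mu\eta)$ for some $c<1$ by tuning $c_\eta$; the paper's decoupling avoids this bookkeeping.
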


Theorem \ref{thm:pl-full} demonstrates that under the PL condition, \beer converges linearly to the global optimum $f^*$, where it finds an $\epsilon$-optimal solution in $O\left( \frac{L}{\mu\rho^3\alpha} \log\left(\frac{1}{\epsilon}\right)  \right)$ iterations.

\paragraph{Stochastic gradient oracles} Under the PL condition, \beer also supports the use of stochastic gradient oracles with bounded local variance (Assumption \ref{ass:bounded-variance}). The following theorem shows that \beer linearly converges to a neighborhood of size $O\left(\frac{\sigma^2}{\alpha b}\right)$ around the global optimum.

\begin{restatable}[Convergence under PL condition]{theorem}{thmpl}\label{thm:pl}
    Suppose Assumptions \ref{ass:mixing}, \ref{ass:smooth}, \ref{ass:bounded-variance}, and \ref{ass:pl} hold. Then there exist absolute constants $c_1 ,c_2 ,c_3 ,c_4 ,c_{\gamma},c_{\eta}>0$, such that if we set $\gamma = c_{\gamma}\alpha\rho$, $\eta = c_{\eta}\gamma\rho^2/L$, then for the Lyapunov function $\Phi_t$ in \eqref{eq:lf}, it holds
    \[\Phi_T \le (1-\mu\eta)^T \Phi_0 + \frac{36c_4  \sigma^2}{c_{\gamma} L b\alpha}.\]
\end{restatable}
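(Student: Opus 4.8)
The plan is to collapse the entire argument into a single geometric recursion on the Lyapunov function $\Phi_t$ and then unroll it. Concretely, I would aim to establish the one-step inequality
\[
\Phi_{t+1} \le (1-\mu\eta)\,\Phi_t + \mu\eta\cdot\frac{36 c_4 \sigma^2}{c_\gamma L b\alpha},
\]
after which the claim is immediate: iterating gives $\Phi_T \le (1-\mu\eta)^T\Phi_0 + \mu\eta\, V \sum_{t=0}^{T-1}(1-\mu\eta)^t$ with $V := \frac{36 c_4 \sigma^2}{c_\gamma L b\alpha}$, and since $\sum_{t=0}^{T-1}(1-\mu\eta)^t \le \frac{1}{\mu\eta}$, the variance contribution telescopes to at most $V$, producing exactly the stated neighborhood term. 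Thus the whole task reduces to producing this contraction.

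To build the recursion I would reuse the per-component estimates already assembled for Theorem~\ref{thm:nonconvex}, which are stated in the stochastic setting and hence already carry the $\sigma^2/b$ terms coming from Assumption~\ref{ass:bounded-variance}. These supply: (i) a descent inequality for the objective gap of the form $\E f(\bar\vx^{t+1}) - f^* \le \E f(\bar\vx^t) - f^* - \frac{\eta}{2}\E\norm{\nabla f(\bar\vx^t)}^2 + (\text{weighted }\Omega_i) + (\text{variance})$; and (ii) contraction inequalities for the compression errors $\Omega_1^t,\Omega_2^t$ and the consensus errors $\Omega_3^t,\Omega_4^t$, each of the shape $\Omega_i^{t+1} \le (1-\Theta(\text{rate}_i))\Omega_i^t + (\text{coupling}) + (\text{variance})$, where $\text{rate}_i$ scales like $\alpha$, $\gamma\rho$, or $\rho^2$. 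Summing these with the Lyapunov weights $c_1 L/n,\ c_2\rho^2/(nL),\ c_3 L/n,\ c_4\rho^4/(nL)$ and the prescribed step sizes $\gamma = c_\gamma\alpha\rho$, $\eta = c_\eta\gamma\rho^2/L$ is precisely the bookkeeping carried out for Theorem~\ref{thm:pl-full}; the only new feature relative to that deterministic argument is the additive variance, which I keep track of and collect into the single floor $V$ above.

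The crucial use of the PL condition (Assumption~\ref{ass:pl}) is to turn the gradient-norm decrease into a multiplicative contraction of the objective gap, via $-\frac{\eta}{2}\E\norm{\nabla f(\bar\vx^t)}^2 \le -\mu\eta(\E f(\bar\vx^t)-f^*)$. I would then verify that each error term contracts at a rate of at least $\mu\eta$, so that after weighting all five contributions jointly contract at rate $(1-\mu\eta)$. This is where the step-size choice is decisive: since $\eta = c_\eta c_\gamma\,\alpha\rho^3/L$ and $\mu \le L$, we have $\mu\eta \le c_\eta c_\gamma\,\alpha\rho^3$, which is far below the intrinsic rates $\Theta(\alpha)$, $\Theta(\gamma\rho)$, $\Theta(\rho^2)$ of the $\Omega_i$; the spare contraction in each error term therefore absorbs the cross-couplings and still leaves an overall factor of at least $(1-\mu\eta)$, yielding the target one-step recursion.

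The main obstacle is not the PL conversion but the simultaneous rate bookkeeping: one must show a single factor $(1-\mu\eta)$ can be pulled out of the whole weighted sum, i.e. that all five pieces contract at least this fast while the couplings among the compression errors, the consensus errors, and the gradient-tracking sequence $\mV^t$ remain controlled. Because the stochastic gradient enters $\mV^t$ through Line~\ref{line:update-v}, the variance propagates into $\Omega_2^t$ and $\Omega_4^t$ and, via the descent lemma, into the objective; the delicate point is thus confirming that these variance contributions, once weighted by $c_2\rho^2/(nL)$ and $c_4\rho^4/(nL)$, aggregate to the floor $\frac{36c_4\sigma^2}{c_\gamma Lb\alpha}$ up to the claimed constant. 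Since the deterministic PL recursion (Theorem~\ref{thm:pl-full}) and the stochastic nonconvex recursion (Theorem~\ref{thm:nonconvex}) are already in hand, the work is to merge their two ingredients and check that the constants are mutually compatible under a common choice of $c_1,\dots,c_4,c_\gamma,c_\eta$.
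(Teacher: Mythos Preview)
Your overall strategy---reuse the per-component recursions for $\Omega_1^t,\dots,\Omega_4^t$ from the stochastic nonconvex analysis, combine them with the objective descent, invoke the PL condition to turn $-\tfrac{\eta}{2}\E\|\nabla f(\bar\vx^t)\|^2$ into $-\mu\eta(\E f(\bar\vx^t)-f^*)$, and then verify that a single factor $(1-\mu\eta)$ can be pulled out of the weighted sum---is exactly the paper's route. The paper writes this as a linear system $\vs^\top(\mA-(1-\eta\mu)\mI)+\vq^\top\le\bm 0$ together with $\tfrac{\eta}{2}-\tfrac{\eta^2 L}{2}-\vs^\top\vb_1\ge 0$, which is precisely the ``simultaneous rate bookkeeping'' you describe, and checks feasibility of the constants just as in the nonconvex case.

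There is, however, one concrete discrepancy. You target the one-step inequality $\Phi_{t+1}\le(1-\mu\eta)\Phi_t+\mu\eta\cdot V$ with $V=\tfrac{36c_4\sigma^2}{c_\gamma Lb\alpha}$, and your unrolling relies on the $\mu\eta$ prefactor to cancel the geometric sum. The paper does \emph{not} obtain that prefactor: its one-step bound is $\Phi_{t+1}\le(1-\mu\eta)\Phi_t+V$, with the per-step variance term $(\tfrac{\eta}{2n}+\vs^\top\vb_2)\tfrac{\sigma^2}{b}$ bounded directly by $V$. The reason your stronger form is not available is that the dominant variance enters through the $\Omega_4$ recursion (the gradient-tracking noise $\tfrac{12n\sigma^2}{b\gamma\rho}$) and the $\Omega_2$ recursion (the term $\tfrac{12n\sigma^2}{b\alpha}$); after multiplying by the Lyapunov weights these give $\Theta\!\big(\sigma^2/(Lb c_\gamma\alpha)\big)$ per step, with no factor of $\mu\eta$. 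So the step ``confirming that these variance contributions \dots\ aggregate to the floor'' succeeds, but the extra $\mu\eta$ you inserted in front of that floor does not appear. If you drop that factor, your argument matches the paper's proof verbatim; the paper then simply recurses $\Phi_{t+1}\le(1-\mu\eta)\Phi_t+V$ to conclude.
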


\subsection{Proof sketch}\label{sec:proof-sketch}

We now provide a proof sketch of Theorem \ref{thm:nonconvex-full}, which establishes the $O(1/T)$ rate of \beer in the nonconvex setting using full gradient, highlighting the technical reason of the rate improvement of \beer over \algname{CHOCO-SGD}.

Recalling the quantities $\Omega_1^t$ to $\Omega_4^t$ from (\ref{eq:defn-omega}), which capture the approximation errors using compression and the consensus errors of $\mX^t$ and $\mV^t$, we would like to control these errors by obtaining inequalities of the form:
\[\Omega_i^{t+1} \le (1-a_i) \Omega_i^t + b_i, \quad \forall i \in \{1,2,3,4\},\]
where $0<a_i <1$ denotes the size of the contraction, and $b_i>0$ wraps together other terms, which may depend on $\Omega_j^t$ for $j\neq i$, as well as the expected squared gradient norm of $\bar\vv^t$, i.e.,
\begin{equation}\label{eq:Omega_v}
\Omega_5^t = \E\norm{\bar\vv^t}^2.
\end{equation}
Then, by choosing the Lyapunov function properly (cf.~\eqref{eq:lf}), we can show that the Lyapunov function actually descends, and small manipulations lead to the claimed convergence results in Theorem \ref{thm:nonconvex-full}.
 
We now explain briefly how gradient tracking helps in \beer. Note that \algname{CHOCO-SGD} also has the control variable $\mH^t$ for the model $\mX^t$, therefore in its analysis, it also deals with the quantities $\Omega_1^t$ and $\Omega_3^t$. However, \algname{CHOCO-SGD} also needs to bound the term $\normF{\mV^t}^2$, where $\mV^t = \nabla F(\mX^t)$ for \algname{CHOCO-SGD} when using full gradients. Thus, \algname{CHOCO-SGD} needs to assume the bounded gradient assumption and only obtain a slower $O(1/T^{2/3})$ convergence rate. In contrast, \beer deals with the term $\normF{\mV^t}^2$ by decomposing it using  Young's inequality, leading to
\[\normF{\mV^t}^2 \le (1+\beta)\Omega_4^t + (1+1/\beta)\Omega_5^t\]
for some $\beta > 0$. Here, $\Omega_4^t$ can be controlled via the \emph{gradient tracking} technique (see Line \ref{line:update-v} in Algorithm \ref{alg:becdo}) {\em without} the bounded gradient assumption, and $\Omega_5^t$ can be handled using the smoothness assumption (Assumption~\ref{ass:smooth}).

\section{Numerical Experiments}
\label{section:numerical_experiments}
This section presents numerical experiments on real-world datasets to showcase \beer's superior ability to handle data heterogeneity across the clients,
by running each experiment on unshuffled datasets and comparing the performances with the state-of-the-art baseline algorithms both with and without communication compression. The code can be accessed at:
\begin{center}
\url{https://github.com/liboyue/beer}.
\end{center}

%\subsection{Experiment setup} 
%\label{sec:exp_setup}
We run experiments on two nonconvex problems: logistic regression with a nonconvex regularizer~\citep{wang2018spiderboost} on the \dataset{a9a} dataset~\citep{chang2011libsvm}, and training 1-hidden layer neural network on the \dataset{MNIST} dataset \citep{lecun1995learning}.  
For logistic regression with a nonconvex regularizer, following~\citet{wang2018spiderboost}, the objective function over a datum $(\va,b)$ is defined as
    \[f(\vx; (\va,b)) =   \log\left(1+\exp(-b \va^\top\vx )\right) + \alpha\sum \limits_{j=1}^d\frac{x_j^2}{1+x_j^2},\]
where the last term is the nonconvex regularizer and the regularization parameter is set to $\alpha=0.05$. 

For 1-hidden layer neural network training,
we use $32$ hidden neurons, sigmoid activation functions and cross-entropy loss.
The objective function over a datum $(\va,b)$ is defined as 
$$f(\vx; (\va,b)) = \ell(\mathsf{softmax}(\bm W_2 ~\mathsf{sigmoid}( \bm W_1 \va + \bm c_1) + \bm c_2), b),$$
where $\ell(\cdot, \cdot)$ denotes the cross-entropy loss, the optimization variable is collectively denoted by $\vx = \text{vec}(\bm W_1, \bm c_1, \bm W_2, \bm c_2)$, where the dimensions of the network parameters $\bm W_1$, $\bm c_1$, $\bm W_2$, $\bm c_2$ are $64 \times 784$, $64 \times 1$, $10 \times 64$, and $10 \times 1$, respectively.

For both experiments, we split the \emph{unshuffled} datasets evenly to $10$ clients that are connected by a ring topology. By using unshuffled data, we can simulate the scenario with high data heterogeneity across clients.
Approximately, for the \dataset{a9a} dataset, 5 clients receive data with label $1$ and others receive data with label $0$; for the \dataset{MNIST} dataset, client $i$ receives data with label $i$. We use the FDLA matrix \citep{xiao2004fast} as the mixing matrix to perform weighted information aggregation to accelerate convergence.

\begin{figure*}[tb]
\begin{center}
\begin{tabular}{cc}
\includegraphics[width=0.4\textwidth]{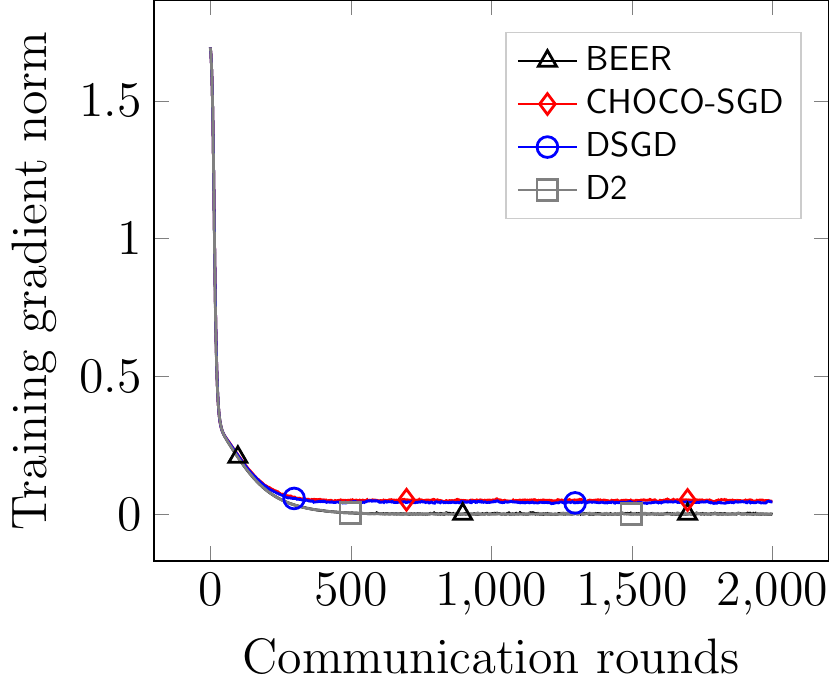} & \includegraphics[width=0.4\textwidth]{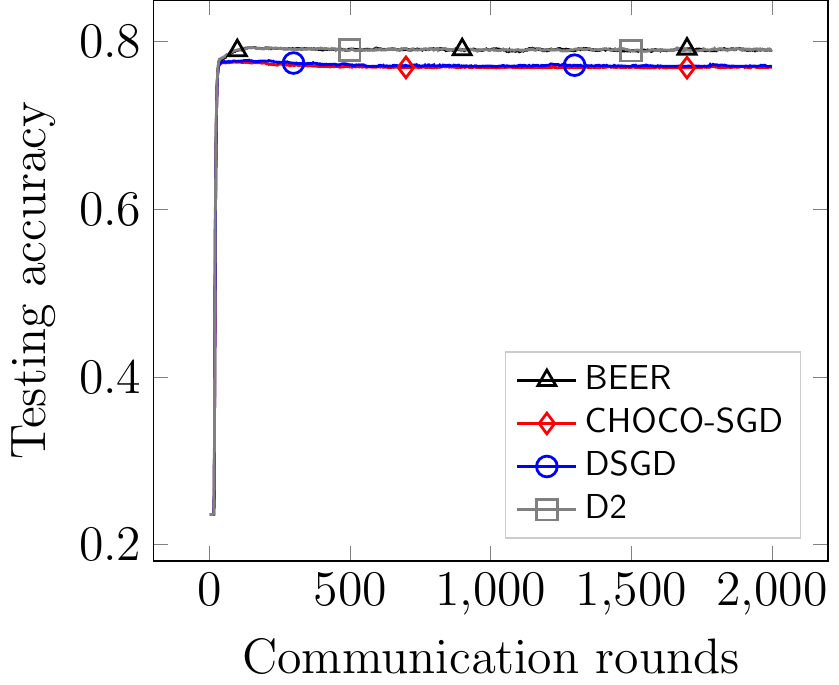} \\
\vspace{0.05in}\\
\includegraphics[width=0.4\textwidth]{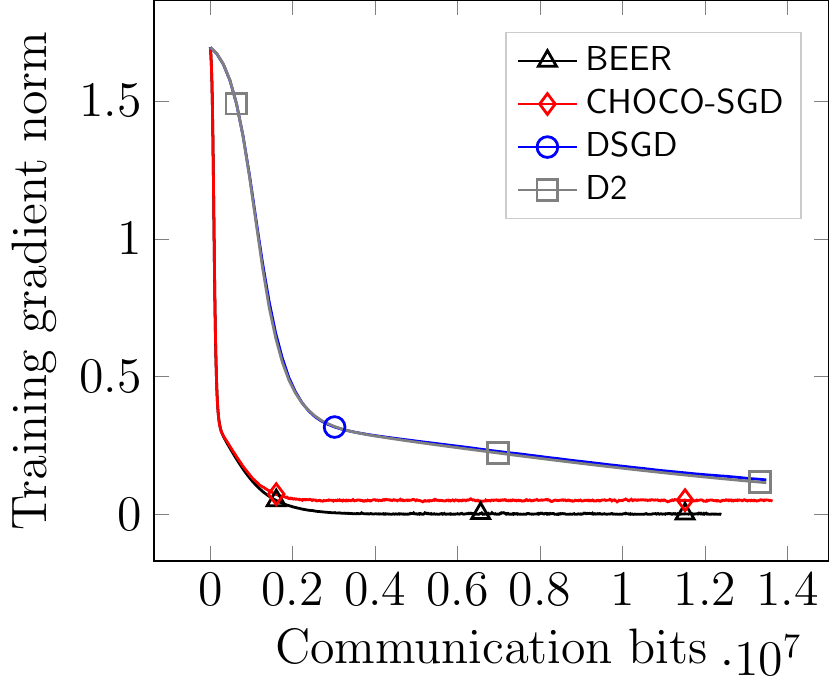} & \includegraphics[width=0.4\textwidth]{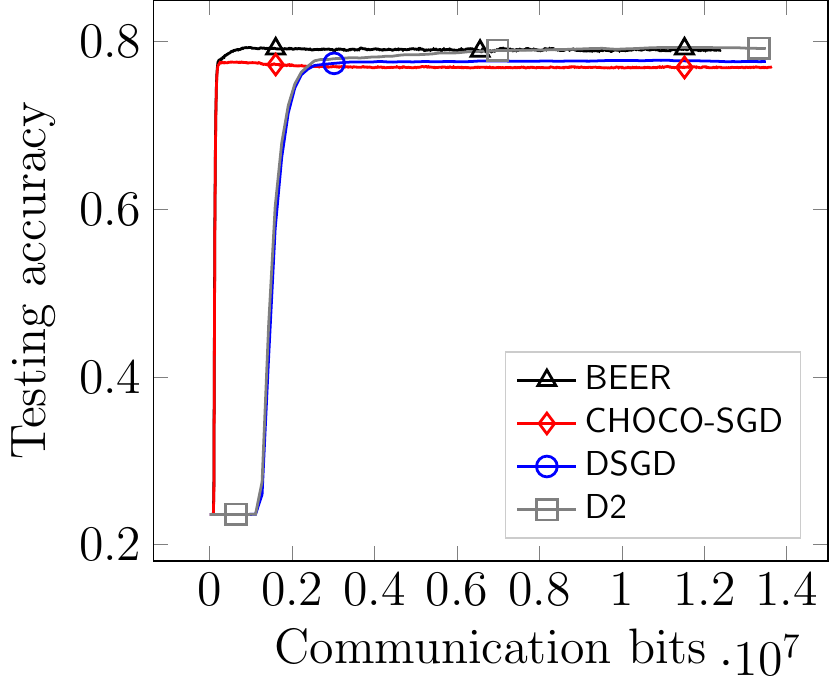}
\end{tabular}
\end{center}
\caption{The training gradient norm and testing accuracy against communication rounds (top two panels) and communication bits (bottom two panels) for logistic regression with nonconvex regularization on unshuffled \dataset{a9a} dataset. Both \beer and \algname{CHOCO-SGD}  employ the biased $\text{gsgd}_b$ compression \citep{alistarh2017qsgd} with $b=5$. \label{fig:logistic_regression}}
\end{figure*}

\begin{figure*}[tb]
\begin{center}
\begin{tabular}{cc}
\includegraphics[width=0.385\textwidth]{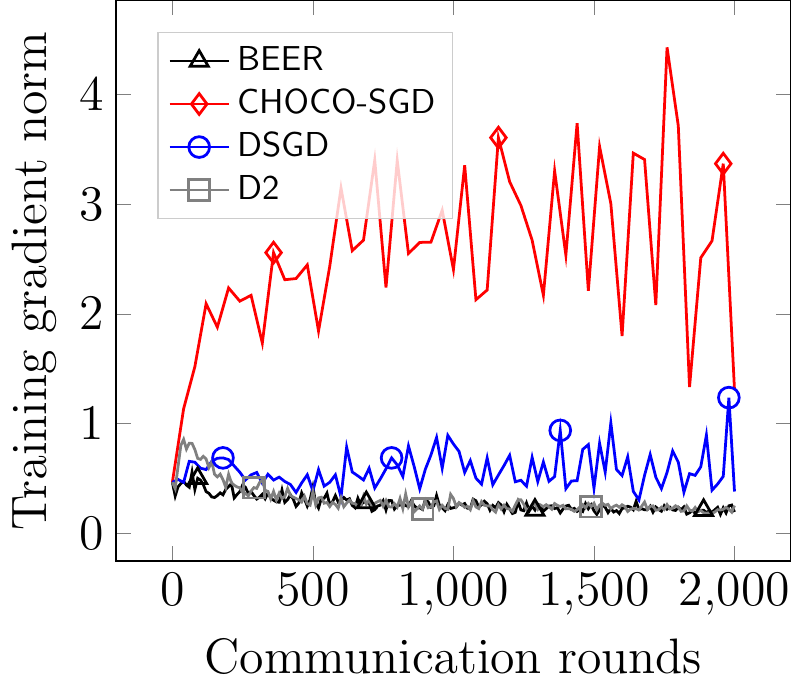} & \includegraphics[width=0.4\textwidth]{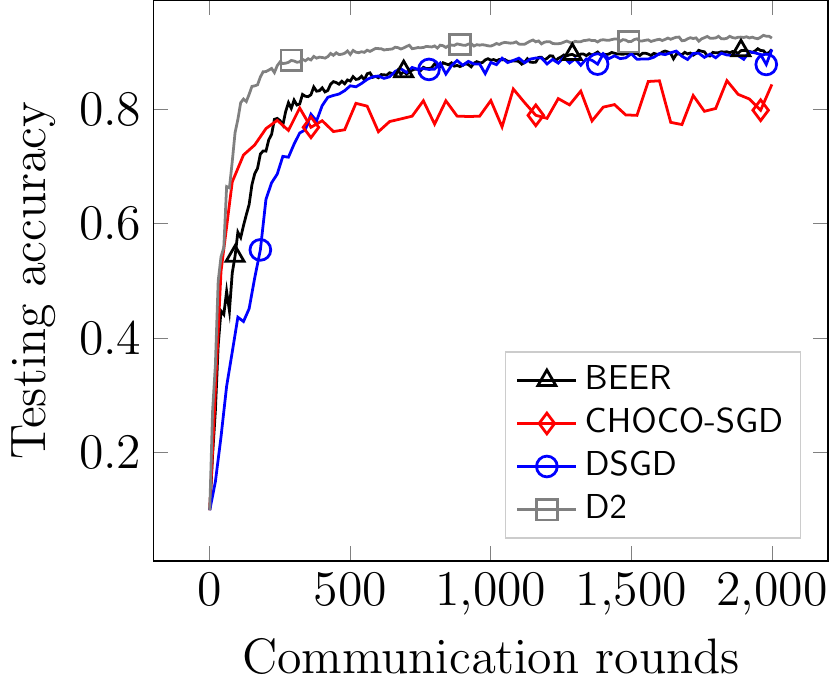} \\
\vspace{0.05in}\\
\includegraphics[width=0.385\textwidth]{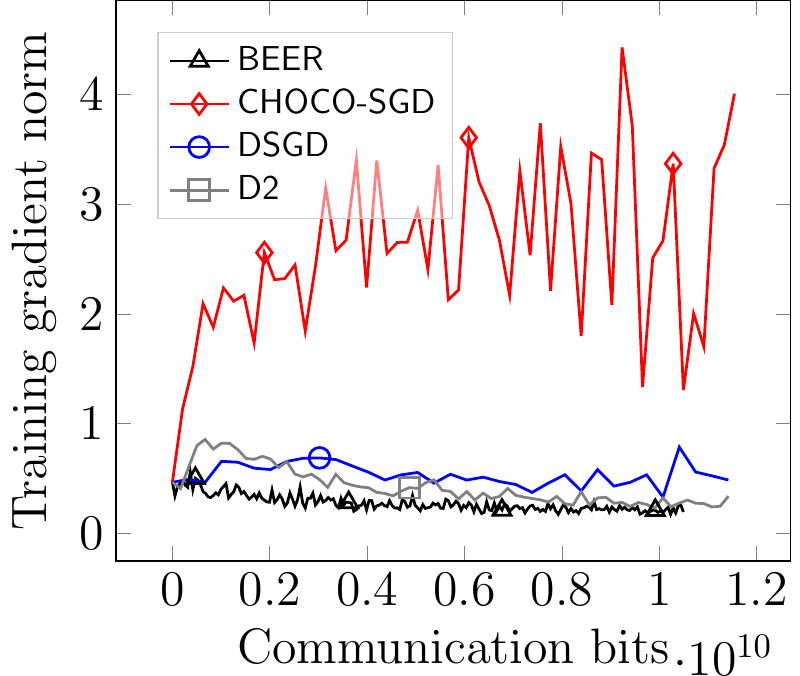} & \includegraphics[width=0.4\textwidth]{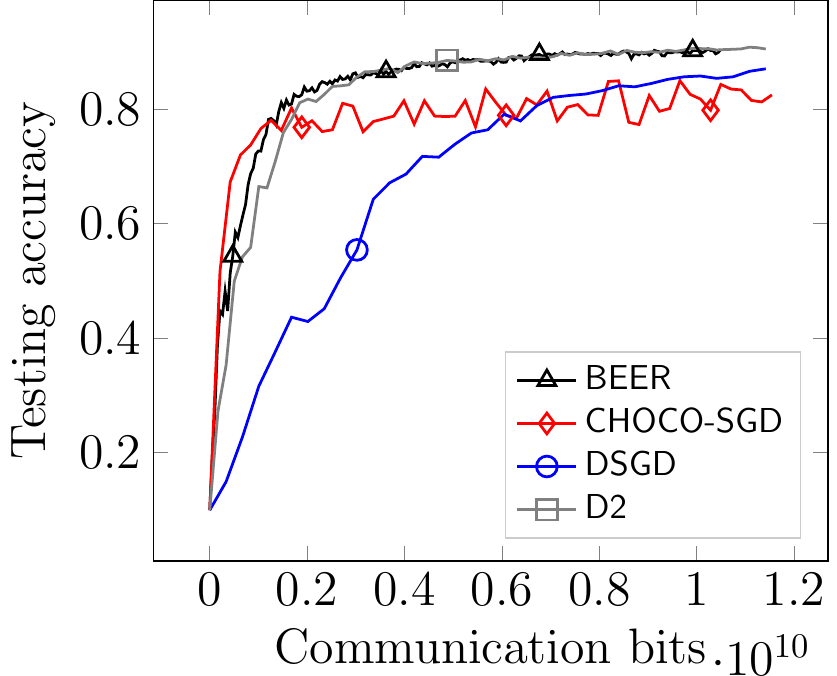}
\end{tabular}
\end{center}
\caption{The training gradient norm and testing accuracy against communication rounds (top two panels) and communication bits (bottom two panels) for classification on unshuffled \dataset{MNIST} dataset using a 1-hidden-layer neural network. Both \beer and \algname{CHOCO-SGD} employ the biased $\text{gsgd}_b$ compression \citep{alistarh2017qsgd} with $b=20$. \label{fig:nn}}
\end{figure*}

\subsection{Comparisons with state-of-the-art algorithms} 
%Now we show and interpret the experiment results. 

We compare \beer with 1) \algname{CHOCO-SGD} \citep{koloskova2019decentralized}, which is the state-of-the-art nonconvex decentralized optimizing algorithm using communication compression, and 2)  \algname{DSGD}~\citep{lian2017can} and \algname{$D^2$}~\citep{tang2018d}, which are decentralized optimization algorithms without compression. All algorithms are initialized in the same experiment by the same initial point. Moreover, we use the same best-tuned learning rate $\eta = 0.1$, batch size $b = 100$, and biased compression operator ($\text{gsgd}_b$)~\citep{alistarh2017qsgd} for \beer and \algname{CHOCO-SGD} on both experiments.

\Cref{fig:logistic_regression} and \Cref{fig:nn} plot the training gradient norm and testing accuracy against communication rounds and communication bits for logistic regression with nonconvex regularization and 1-hidden-layer neural network training, respectively.

In the nonconvex logistic regression experiment (cf.~\Cref{fig:logistic_regression}), the algorithms with communication compression (\beer and \algname{CHOCO-SGD}) converge faster than the uncompressed algorithms (\algname{DSGD} and \algname{$D^2$}) in terms of the communication bits. However, \algname{CHOCO-SGD} fails to converge to a small gradient norm solution since it cannot tolerate a high level of data dissimilarity across different clients, and its performance is not comparable to \algname{$D^2$}. In contrast, \beer can converge to a point with a relatively smaller gradient norm, which is comparable to \algname{$D^2$}. The performance of testing accuracy is similar to that of the training gradient norm, where \beer achieves the best testing accuracy and also learns faster than the uncompressed algorithms.

Turning to the neural network experiment (cf.~\Cref{fig:nn}),
in terms of the final training gradient norm,
\beer converges to a solution comparable to \algname{$D^2$}, but at a lower communication cost,
while \algname{CHOCO-SGD} and \algname{DSGD} cannot converge due to the data heterogeneity.
In terms of testing accuracy, \beer and \algname{$D^2$} have very similar performance, and outperform \algname{CHOCO-SGD} and \algname{DSGD}.

In summary, \beer has much better performance in terms of communication efficiency than \algname{CHOCO-SGD} in heterogeneous data scenario, which corroborates our theory.
\beer also performs similarly or even better than the uncompressed baseline algorithm \algname{$D^2$}, and much better than \algname{DSGD}. In addition, by leveraging different communication compression schemes, \beer allows more flexible trade-offs between communication and computation, making it an appealing choice in practice.

\subsection{Impacts of network topology and compression schemes on \beer}

We further investigate the impact of network topology and compression schemes on the performance of \beer.
We follow the same setup as above, and run logistic regression with nonconvex regularization ($\alpha = 0.05$) on the unshuffled \dataset{a9a} dataset, by splitting it evenly to $40$ agents. All experiments use the same best-tuned step size $\eta = 0.5$, batch size $b=100$, and $\gamma = 0.7$, to guarantee fast convergence.
%and use FDLA matrix for weighted information aggregation.

\paragraph{Impacts of network topology} Figure \ref{fig:topologies} shows the training gradient norm and testing accuracy of \beer with respect to the communication rounds over different network topologies using the $\text{gsgd}_5$ compression \citep{alistarh2017qsgd}: ring topology ($\rho = 0.022$),
star topology ($\rho = 0.049$),
grid topology ($\rho = 0.063$), 
and Erd\"{o}s-R\`{e}nyi topology with connectivity probability $p = 0.5$ and $p = 0.2$ ($\rho = 0.51$ and $\rho = 0.77$, respectively). Despite the huge difference of the spectral gaps $\rho$ of different topologies, \beer can use nearly the same hyper-parameters to obtain similar performance. The experiment results complement our theoretical analysis and show that \beer may converge way better in practice despite its cubic dependency of $1/\rho$ in Theorem \ref{thm:nonconvex}.

\begin{figure}[tb]
\begin{center}
\begin{tabular}{cc}
\includegraphics[width=0.4\textwidth]{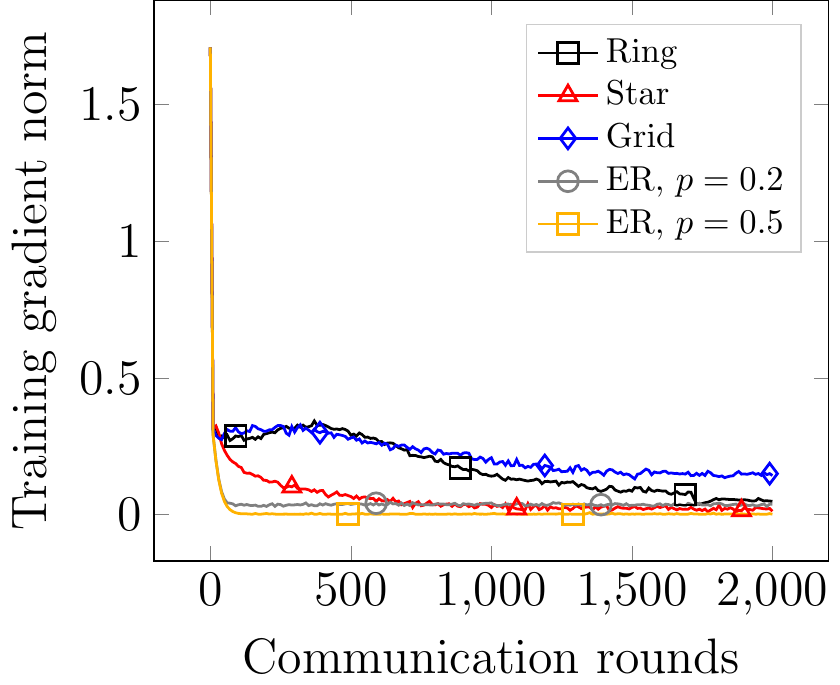} & \includegraphics[width=0.4\textwidth]{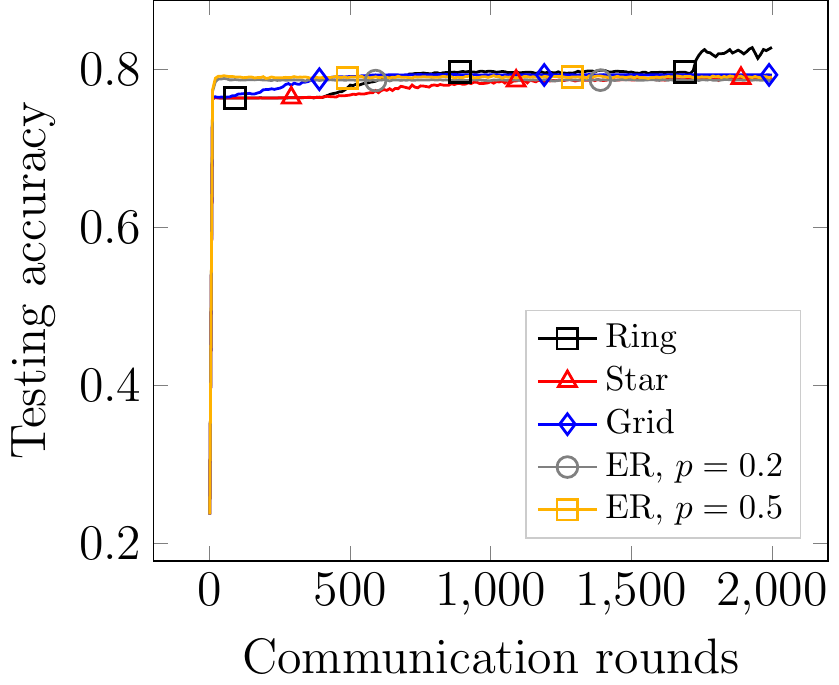} 
\end{tabular}
\end{center}
\caption{The training gradient norm and testing accuracy against communication rounds for \beer using the biased $\text{gsgd}_b$ compression \citep{alistarh2017qsgd} with $b=5$
for logistic regression with nonconvex regularization on unshuffled \dataset{a9a} dataset. \label{fig:topologies}}
\end{figure}

\paragraph{Impacts of compression schemes} \Cref{fig:compressions} shows the training gradient norm and testing accuracy of \beer with respect to the communication rounds and communication bits on a ring topology using different compression schemes: no compression, $\text{gsgd}_{5}$ and $\text{top}_{10}$ (see Appendix \ref{sec:comp} for their formal definitions).
The parameters within the compression operators are chosen such that \beer with different compression operators transfer similar amount of bits per communication round.
All experiments use the same best-tuned step size $\eta = 0.5$, batch size $b=100$ and $\gamma = 0.7$, except that we use $\eta = 0.005$ and $\gamma = 0.8$ for $\text{top}_{10}$ compression. In terms of communication bits, it can be seen that
the use of compression operators improves over the uncompressed baseline,
in the sense that,
all algorithms with compression converge to a solution with lower gradient norm and higher testing accuracy at a lower communication cost.
In terms of communication rounds and testing accuracy, different compression operators can lead to significantly behaviors. For example, \beer with $\text{gsgd}_5$ compression operator converges faster than \beer without compression, but \beer with the $\text{top}_{10}$ compression operator converges slower than \beer without compression.
Among all experiments, \beer with $\text{gsgd}_5$ reaches the highest final testing accuracy,
while behaves similar to \beer without compression in terms of communication rounds,
which implies that $\text{gsgd}_b$ may be a more practical compression operator, at least more suitable for \beer.

% These hyperparameters are tuned to be approximately optimal.

\begin{figure*}[!htb]
\vspace{5mm}
\begin{center}
\begin{tabular}{cc}
\includegraphics[width=0.4\textwidth]{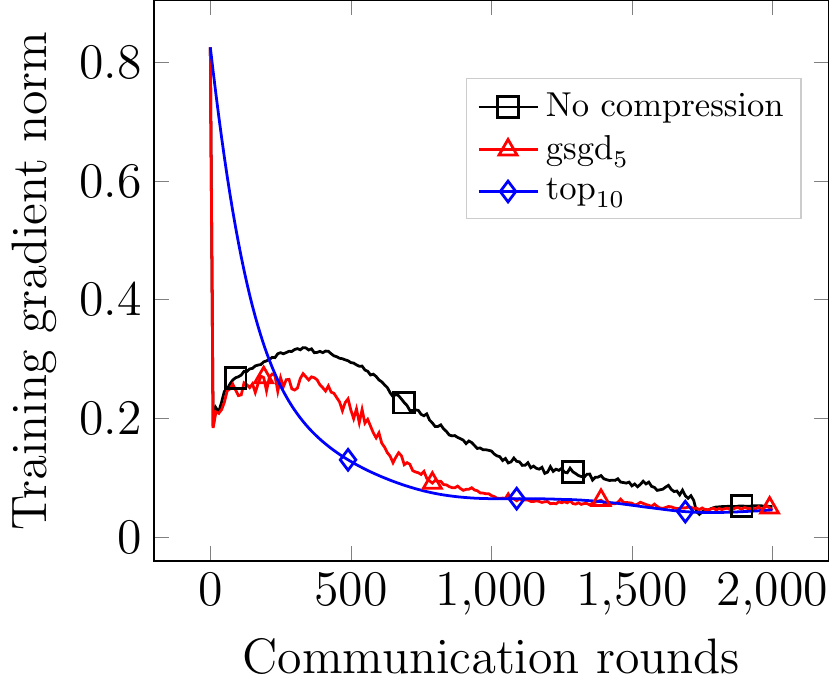} & \includegraphics[width=0.4\textwidth]{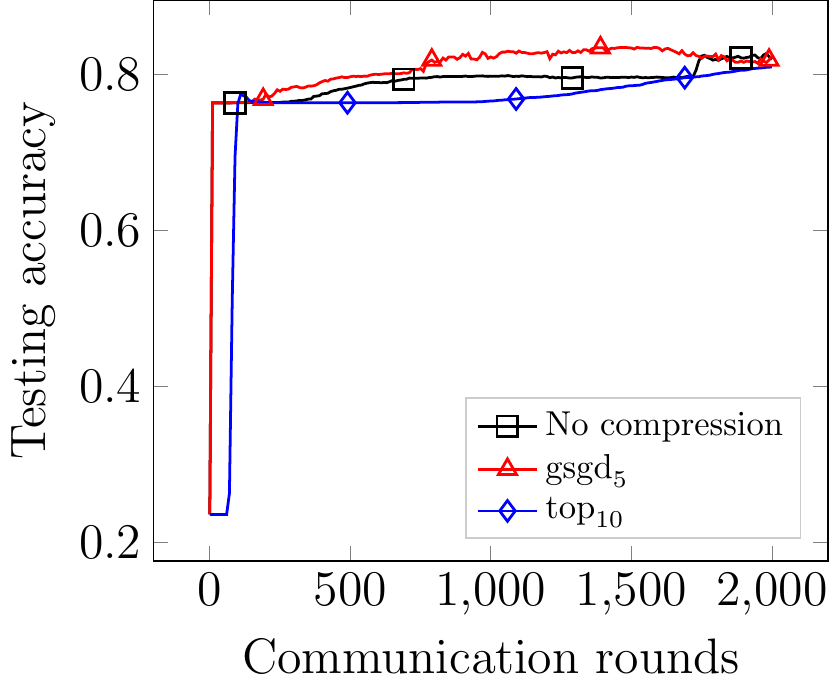} \\
\vspace{0.05in}\\
\includegraphics[width=0.4\textwidth]{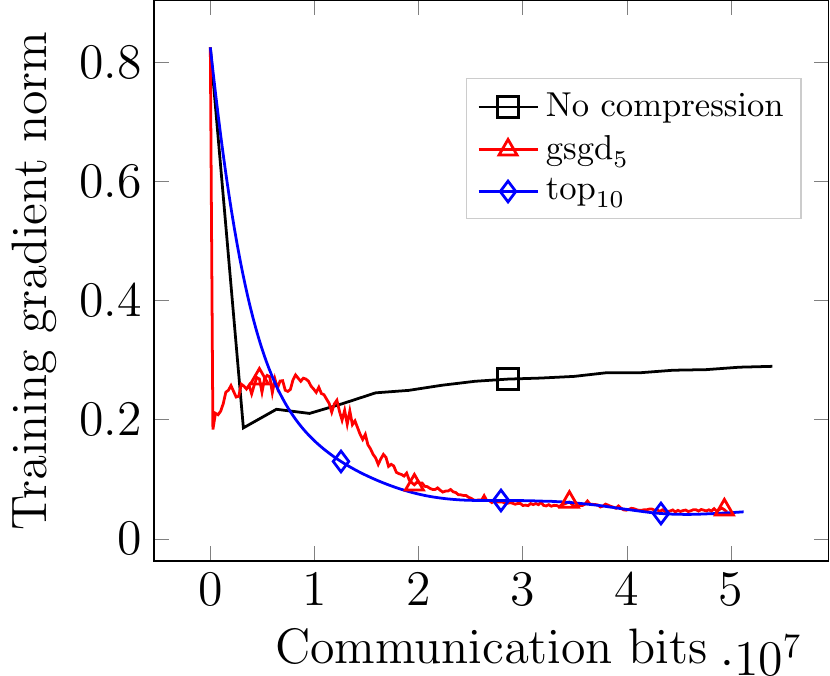} & \includegraphics[width=0.4\textwidth]{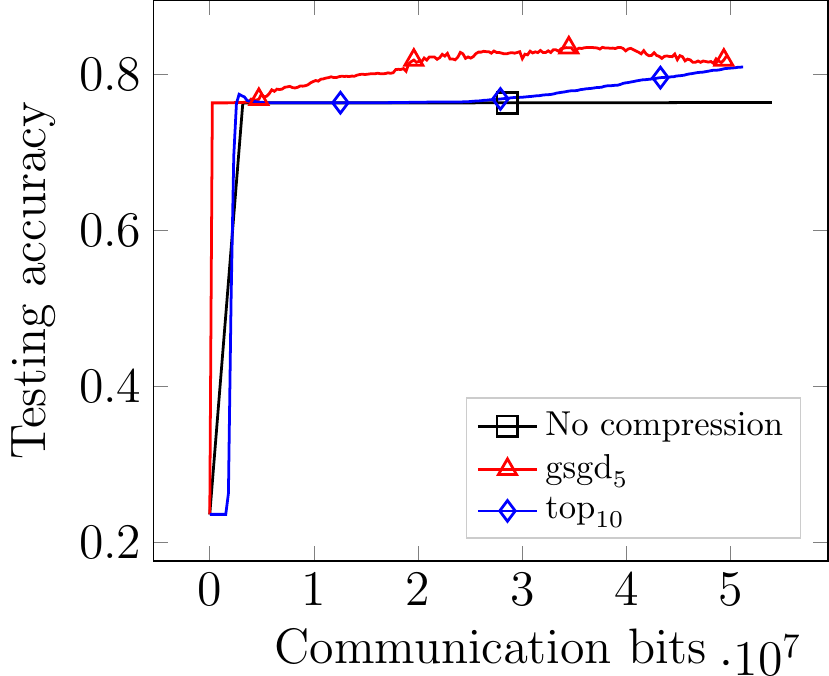}
\end{tabular}
\end{center}
\caption{The training gradient norm and testing accuracy against communication rounds (top two panels) and communication bits (bottom two panels) for \beer using different compression schemes for logistic regression with nonconvex regularization on unshuffled \dataset{a9a} dataset.
\label{fig:compressions}}
\end{figure*}

\subsection{Convolutional network network training}

We further compare the performance of \beer and \algname{CHOCO-SGD} on training a convolutional neural network using the unshuffled \dataset{MNIST} dataset.
Specifically, the network is consist of three modules:
the first module is a 2-d convolution layer (1 input channel, 16 output channels, kernel size 5, stride 1 and padding 2) followed by 2-d batch normalization, ReLU activation and 2-d max pooling (kernel size 2 and stride 2);
the second module is the same as the first module, except the convolution layer has 16 input channels and 32 output channels;
the last module is a fully-connected layer with 1568 inputs and 10 outputs. We adopt the standard cross-entropy loss, and simply average each agent's model with its neighbors. \Cref{fig:conv_nn} shows the testing gradient norm and accuracy against the communication bits. It can be seen that \beer outperforms \algname{CHOCO-SGD}
in terms of both testing gradient norm and testing accuracy.
Both algorithms converge fast initially,
however, due to to the extreme data heterogeneity,
their convergence speeds significantly degenerate after a short time.
\beer keeps improving the objective when \algname{CHOCO-SGD} hits its error floor,
which highlights \beer's advantage to deal with data heterogeneity.

\begin{figure*}[!htb]
\begin{center}
\centerline{\includegraphics[width=0.8\textwidth]{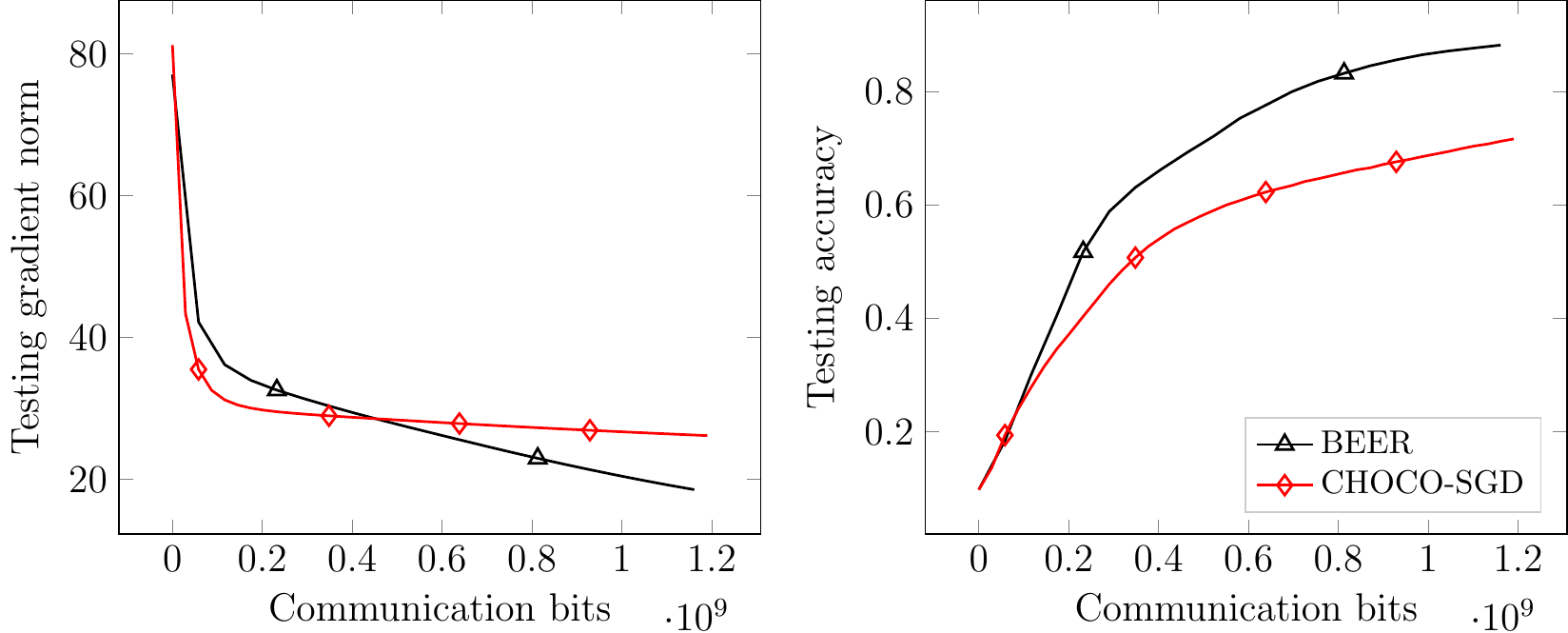}}
\caption{The testing gradient norm and testing accuracy against communication bits on unshuffled \dataset{MNIST} dataset using a 3-layer convolutional neural network. Both \beer and \algname{CHOCO-SGD} employ the biased $\text{gsgd}_b$ compression \citep{alistarh2017qsgd} with $b=5$. \label{fig:conv_nn}}
\end{center}
%\vskip -0.1in
\end{figure*}

\section{Conclusion}

This paper presents \beer, which achieves a faster $O(1/T)$ convergence rate for decentralized nonconvex optimization with communication compression, {\em without} imposing the bounded dissimilarity or bounded gradient assumptions. In addition, a faster linear rate of convergence is established for \beer under the PL condition. Numerical experiments are provided to corroborate our theory on the advantage of \beer in the data heterogeneous scenario. An interesting direction of future work is to investigate the lower bounds for decentralized (nonconvex) optimization with communication compression. In addition, improving the dependency of \beer with the network topology parameter $\rho$, possibly leveraging the analysis in \citet{koloskova2021improved}, is of interest.

\section*{Acknowledgements}
The work of H. Zhao is supported in part by NSF, ONR, Simons Foundation, DARPA and SRC through awards to S. Arora. The work of B. Li, Z. Li and Y. Chi is supported in part by ONR N00014-19-1-2404, by AFRL under FA8750-20-2-0504, and by NSF under CCF-1901199, CCF-2007911 and CNS-2148212.
B. Li is also gratefully supported by Wei Shen and Xuehong Zhang Presidential Fellowship at Carnegie Mellon University.
The work of P. Richt\'arik is supported by KAUST Baseline Research Fund.
 
\bibliographystyle{apalike}
\bibliography{ref.bib}

\appendix

\section{Examples of Compression Operators}\label{sec:comp}
 
We provide some examples of compression operators satisfying Definition \ref{def:comp} that are used in our experiments.

\paragraph{$\text{gsgd}_b$~\citep{alistarh2017qsgd}} $\text{gsgd}_b: \R^d\to\R^d$ ($b > 1$), or random dithering, is a compression operator satisfying the following formula
\[\text{gsgd}_b(\vx) := \frac{\norm{\vx}}{\tau}\cdot \sign(\vx) \cdot 2^{-(b-1)} \cdot \left\lfloor \frac{2^{(b-1)} | \vx|}{\norm{\vx}} + \vu \right\rfloor,\]
where $\tau = 1 + \min\left\{\frac{d}{2^{2(b-1)}},\frac{\sqrt{d}}{2^{(b-1)}}\right\}$, and $\vu$ is the random dithering vector uniformly sampled from $[0,1]^d$. It follows that $\text{gsgd}_b$ satisfies Definition \ref{def:comp} with $\alpha = 1 / \tau$.

% \paragraph{$\text{random}_k$~\citep{wangni2018gradient}} $\text{random}_k: \R^d\to\R^d$, or random sparsification, is a compression operator satisfying the following formula
% \[\text{random}_k(\vx) := \vx\odot \vu,\]
% where $\vu$ is uniformly sampled from the following set $\{\vu'\in\{0,1\}^d| \norm{\vu'}_1 = k\}$ and $\odot$ denotes the Hadamard product or element-wise product. In words, $\text{random}_k(\vx)$ randomly keeps $k$ coordinates of $\vx$ and sets the remaining coordinates of $\vx$ to $0$. It follows that $\text{random}_k(\vx)$ satisfies Definition \ref{def:comp} with $\alpha = k/d$.

\paragraph{$\text{top}_k$~\citep{alistarh2018convergence,stich2018sparsified}} $\text{top}_k: \R^d\to\R^d$ is a compression operator satisfying the following formula
\[\text{top}_k(\vx) := \vx\odot \vu(\vx),\]
where $\vu(\vx)\in\{0,1\}^d$ that satisfies $\norm{\vu}_1 = k$ and $\vu_i = 1$ for all $i\in \gI$ such that $|\vx_i| \ge |\vx_j|$ for any $i\in\gI$ and $j\in [d]\setminus \gI$. In words, $\text{top}_k$ keeps the coordinates of $\vx$ with the $k$ largest absolute values, and sets the other coordinates to $0$. It follows that $\text{top}_k(\vx)$ satisfies Definition \ref{def:comp} with $\alpha = k/d$.

\section{Proof of Main Theorems}

\subsection{Technical preparation}
We first recall some classical inequalities that helps our derivation.
\begin{proposition}
    Let $\{\vv_1,\dots,\vv_{\tau}\}$ be a set of $\tau$ vectors in $\R^d$. Then, $\forall  \beta > 0$, we have
    \begin{align}
         \langle \vv_i,\vv_j\rangle & \le \frac{\beta}{2}\|\vu\|^2 + \frac{1}{2\beta}\|\vv\|^2, \label{eq:cauchy}\\
        \|\vv_i + \vv_j\|^2 & \le (1 + \beta)\|\vv_i\|^2 + \left(1+\frac{1}{\beta}\right)\|\vv_j\|^2,  \label{eq:rti-1}\\
       \left\|\sum_{i=1}^{\tau}\vv_i\right\|^2  & \le\tau\sum_{i=1}^{\tau}\|\vv_i\|^2.\label{eq:rti-2}
    \end{align}
Here, (\ref{eq:cauchy}) is referred as the Cauchy-Schwarz inequality, (\ref{eq:rti-1}) and (\ref{eq:rti-2}) are referred as Young's inequality.
 
 \end{proposition}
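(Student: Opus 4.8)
The plan is to derive all three inequalities from a single elementary fact, namely the scaled Cauchy--Schwarz (or Young) bound \eqref{eq:cauchy}, which serves as the engine for the other two. First I would establish \eqref{eq:cauchy} for arbitrary vectors $\vu,\vv\in\R^d$ and any $\beta>0$ by expanding the manifestly nonnegative quantity $\norm{\sqrt{\beta}\,\vu - \tfrac{1}{\sqrt{\beta}}\,\vv}^2 \ge 0$. Expanding the square gives $\beta\norm{\vu}^2 - 2\dotp{\vu}{\vv} + \tfrac{1}{\beta}\norm{\vv}^2 \ge 0$, and rearranging isolates the inner product as $\dotp{\vu}{\vv} \le \tfrac{\beta}{2}\norm{\vu}^2 + \tfrac{1}{2\beta}\norm{\vv}^2$, which is exactly \eqref{eq:cauchy}.

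Next, for \eqref{eq:rti-1} I would expand $\norm{\vv_i+\vv_j}^2 = \norm{\vv_i}^2 + 2\dotp{\vv_i}{\vv_j} + \norm{\vv_j}^2$ and bound the cross term using \eqref{eq:cauchy} with the substitution $\vu=\vv_i$, $\vv=\vv_j$, which yields $2\dotp{\vv_i}{\vv_j} \le \beta\norm{\vv_i}^2 + \tfrac{1}{\beta}\norm{\vv_j}^2$. Collecting the coefficients of $\norm{\vv_i}^2$ and $\norm{\vv_j}^2$ produces $(1+\beta)\norm{\vv_i}^2 + (1+\tfrac{1}{\beta})\norm{\vv_j}^2$, as claimed.

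Finally, for \eqref{eq:rti-2} I would write $\norm{\sum_{i=1}^{\tau}\vv_i}^2 = \sum_{i=1}^{\tau}\sum_{j=1}^{\tau}\dotp{\vv_i}{\vv_j}$ and apply \eqref{eq:cauchy} with the symmetric choice $\beta=1$ to each cross term, giving $\dotp{\vv_i}{\vv_j}\le \tfrac{1}{2}\norm{\vv_i}^2 + \tfrac{1}{2}\norm{\vv_j}^2$. Summing over all $\tau^2$ index pairs, each norm $\norm{\vv_k}^2$ is counted exactly $\tau$ times, so the double sum is bounded by $\tau\sum_{k=1}^{\tau}\norm{\vv_k}^2$. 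Alternatively this follows from the triangle inequality $\norm{\sum_i\vv_i}\le\sum_i\norm{\vv_i}$ combined with the scalar Cauchy--Schwarz bound $\sum_i 1\cdot\norm{\vv_i}\le\sqrt{\tau}\,(\sum_i\norm{\vv_i}^2)^{1/2}$, then squaring. Since every step is an elementary algebraic manipulation, I anticipate no genuine obstacle; the only points requiring mild care are the index bookkeeping in \eqref{eq:rti-2} and reconciling the notational slip in \eqref{eq:cauchy}, whose right-hand side is written in terms of $\vu,\vv$ while the left-hand side uses $\vv_i,\vv_j$.
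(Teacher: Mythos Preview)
Your proof is correct and complete; each of the three inequalities is derived by elementary expansion and the scaled AM--GM/Cauchy--Schwarz trick, and your bookkeeping in the double sum for \eqref{eq:rti-2} is sound. The paper itself does not supply a proof of this proposition at all---it merely ``recalls'' the inequalities as classical facts---so there is no authorial argument to compare against; your write-up is exactly the kind of standard verification one would give, and you even correctly flag the notational inconsistency in \eqref{eq:cauchy} between $\vv_i,\vv_j$ on the left and $\vu,\vv$ on the right.
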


\paragraph{Additional notation} The following notation will be used throughout our proof:
\begin{align*}
\nabla F(\mX) & := [\nabla f_1(\vx_1), \nabla f_2(\vx_2),\ldots,\nabla f_n(\vx_n)], \qquad \nabla F_b(\mX) := [\nabla f_1(\vx_1), \nabla f_2(\vx_2),\ldots,\nabla f_n(\vx_n)],  \\
\nabla F(\bar\vx) & := [\nabla f_1(\bar\vx), \nabla f_2(\bar\vx), \dots, \nabla f_n(\bar\vx)],\qquad ~~~~\tilde\nabla_b F(\bar\vx) := [\tilde\nabla_b f_1(\bar\vx), \tilde\nabla_b f_2(\bar\vx), \dots, \tilde\nabla_b f_n(\bar\vx)],
\end{align*} 
where $\bar\vx := \frac{1}{n}\mX\vone$ with $\mX = [\vx_1,\vx_2,\dots,\vx_n]$.

\paragraph{Properties of the mixing matrix}
We make note of several useful properties of the mixing matrix in the following lemma. 
\begin{lemma}\label{lem:prop_mixing}
Let $\mW$ be a mixing matrix satisfying Assumption \ref{ass:mixing} and has spectral gap $\rho$, then for any matrix $\mM\in\mathbb{R}^{d\times n}$ and $\bar\vm = \frac{1}{n}\mM\vone$, we have
\begin{equation}
    \normF{\mM \mW - \bar\vm \vone^\top}^2 = \normF{\mM \mW - \bar\vm \vone^\top \mW}^2 \le (1-\rho)\normF{\mM - \bar\vm \vone^\top}^2. \label{eq:lem-omega-3-claim1}
\end{equation}
In addition, for any $\gamma \in (0,1]$, the matrix $\widetilde\mW = \mI + \gamma(\mW - \mI)$ satisfies Assumption \ref{ass:mixing} with a spectral gap at least $\gamma\rho$.
\end{lemma}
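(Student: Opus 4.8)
The plan is to handle the two claims separately, both reducing to the spectral structure of the symmetric, doubly stochastic matrix $\mW$. For the first claim I would dispose of the asserted equality immediately: since $\vone^\top\mW = \vone^\top$ by Assumption \ref{ass:mixing}, we have $\bar\vm\vone^\top\mW = \bar\vm(\vone^\top\mW) = \bar\vm\vone^\top$, so the two matrices inside the Frobenius norms coincide. It then remains to prove the inequality, for which I would factor $\mM\mW - \bar\vm\vone^\top\mW = (\mM - \bar\vm\vone^\top)\mW$ and abbreviate $\mA := \mM - \bar\vm\vone^\top$, reducing the goal to $\normF{\mA\mW}^2 \le (1-\rho)\normF{\mA}^2$.

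The key observation is that $\mA\vone = \mM\vone - \bar\vm(\vone^\top\vone) = \mM\vone - n\bar\vm = \vzero$ by the definition $\bar\vm = \frac1n\mM\vone$; hence every row of $\mA$ is orthogonal to $\vone$. Writing $\normF{\mA\mW}^2 = \sum_k \norm{\mW\va_k}^2$, where $\va_k$ denotes the $k$-th row of $\mA$ and we use $\mW = \mW^\top$ so that $\va_k^\top\mW = (\mW\va_k)^\top$, I would invoke the symmetric eigendecomposition of $\mW$: its leading eigenvector is $\vone/\sqrt{n}$ with eigenvalue $1$, and every vector orthogonal to $\vone$ lies in the span of the remaining eigenvectors whose eigenvalues are bounded in magnitude by $|\lambda_2(\mW)| = 1-\rho$. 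Consequently $\norm{\mW\va_k}^2 \le (1-\rho)^2\norm{\va_k}^2$ for each $k$, and summing over $k$ together with $(1-\rho)^2 \le 1-\rho$ (valid since $\rho\in(0,1]$) yields the claim.

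For the second claim I would rewrite $\widetilde\mW = (1-\gamma)\mI + \gamma\mW$. Symmetry and double stochasticity are inherited directly: $\widetilde\mW^\top = \widetilde\mW$, and $\widetilde\mW\vone = (1-\gamma)\vone + \gamma\vone = \vone$ (and likewise on the left); the entrywise bounds $\widetilde w_{ij}\in[0,1]$ follow from $\gamma\in(0,1]$ and $w_{ij}\in[0,1]$. Because $\widetilde\mW$ is an affine function of $\mW$, it shares the same eigenvectors, and the eigenvalue associated with $\lambda_i(\mW)$ is $\widetilde\lambda_i = 1 - \gamma(1-\lambda_i(\mW))$; in particular $\widetilde\lambda_1 = 1$. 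It then suffices to bound $\max_{i\ge 2}|\widetilde\lambda_i|$. Since $\lambda_i(\mW)\in[-(1-\rho),\,1-\rho]$ for $i\ge2$ and $\lambda\mapsto 1-\gamma(1-\lambda)$ is increasing, the values $\widetilde\lambda_i$ lie in the interval with endpoints $1-\gamma(2-\rho)$ and $1-\gamma\rho$, so the maximum absolute value is attained at one of these two endpoints.

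The main obstacle, and the only step requiring genuine care, is verifying that both endpoints have magnitude at most $1-\gamma\rho$. The upper endpoint equals exactly $1-\gamma\rho \ge 0$. For the lower endpoint I would split into cases: if $1-\gamma(2-\rho)\ge0$, then $|1-\gamma(2-\rho)| = 1-\gamma(2-\rho) \le 1-\gamma\rho$ reduces to $\rho\le1$; if $1-\gamma(2-\rho)<0$, then $|1-\gamma(2-\rho)| = \gamma(2-\rho)-1 \le 1-\gamma\rho$ reduces to $\gamma\le1$. Both hypotheses hold, so $\max_{i\ge2}|\widetilde\lambda_i|\le 1-\gamma\rho < 1$, confirming that $\widetilde\lambda_1 = 1$ is the unique eigenvalue of largest magnitude and that the spectral gap of $\widetilde\mW$ is $1 - \max_{i\ge2}|\widetilde\lambda_i| \ge \gamma\rho$, as claimed.
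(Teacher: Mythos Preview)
Your proof is correct and follows essentially the same approach as the paper: for the first claim, both you and the paper reduce to a row-by-row spectral bound using that each row of $\mM-\bar\vm\vone^\top$ is orthogonal to $\vone$ and hence lies in the span of the non-leading eigenvectors of $\mW$, then invoke $(1-\rho)^2\le 1-\rho$. For the second claim, the paper simply records the eigenvalue shift $\lambda\mapsto (1-\gamma)+\gamma\lambda$ and declares the rest ``simple computation,'' whereas you carry out that computation explicitly (including the case split for the lower endpoint and the entrywise verification of $[0,1]$); your argument is strictly more detailed but not different in spirit.
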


\begin{proof}
The first claim follows from the spectral decomposition of $\mW$. Since $\mW$ is a doubly stochastic matrix, the largest absolute eigenvalue of $\mW$ is $1$ and the corresponding eigenvector is $\vone$. Let  $\vv_2,\dots,\vv_n$ be the eigenvectors of $\mW$ corresponding to the remaining eigenvalues. Then, we have
\[\normF{\mM \mW - \bar\vm \vone^\top}^2 = \normF{\mM \mW - \bar\vm \vone^\top \mW}^2 = \sum_{i=1}^{r} \norm{\mW (\vm_i - \bar\vm_i\vone)}^2,\]
where the first equality follows from $\vone^\top\mW = \vone^\top$, $\vm_i$ denotes the transpose of $i$-th row of matrix $\mM$, and $\bar\vm_i$ denotes the average of $\vm_i$. Now we decompose $\vm_i - \bar\vm_i\vone$ using the eigenvectors of $\mW$. Noting that
\[\vone^\top(\vm_i - \bar\vm_i\vone) = \vone^\top \vm_i - \vone^\top \vone\frac{1}{n}\vone^\top\vm_i = 0,\]
and thus we can write 
\[\vm_i - \bar\vm_i\vone = \sum_{j=2}^n c_j \vv_j\]
for some $\{c_j\}_{j=2}^n$. Then, we have
\[\norm{\mW (\vm_i - \bar\vm_i\vone)}^2 = \norm{\mW \sum_{j=2}^n c_j \vv_j}^2 \le (1-\rho)^2 \sum_{j=2}^n c_j^2 \le (1-\rho)\sum_{j=2}^n c_j^2 = (1-\rho)\norm{\vm_i - \bar\vm_i\vone}^2,\]
and we conclude the proof of this claim.  

For the second claim, recall the fact that if $\vv$ is an eigenvector of $\mW$ corresponding to the eigenvalue $\lambda$, then $\vv$ is also an eigenvector of $\widetilde\mW$ with the corresponding eigenvalue $(1-\gamma) + \gamma \lambda$.  This claim follows from simple computation based on this relation. 

\end{proof}

\paragraph{A key consequence of gradient tracking}
Before diving in the proofs of the main theorems, 
we record a key property of gradient tracking. Specifically, we have the following lemma.

\begin{lemma}\label{lem:gt}
    If $\bar\vv^0 = \frac{1}{n}\tilde\nabla_b F(\mX^0)\vone$, then for any $t \ge 1$, we have
    \begin{equation}\label{eq:avg_gradient}
    \bar\vv^t = \frac{1}{n}\tilde\nabla_b F(\mX^t)\vone,
    \end{equation} 
   and
    \begin{equation}\label{eq:avg_model}
    \bar\vx^{t+1} = \bar\vx^t - \frac{\eta}{n} \tilde\nabla_b F(\mX^t)\vone.
    \end{equation}
\end{lemma}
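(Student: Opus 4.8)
The plan is to prove both identities by a short induction on $t$, exploiting the single algebraic fact that double stochasticity forces $(\mW - \mI)\vone = \vzero$, so that right-multiplication by $\tfrac{1}{n}\vone$ annihilates every mixing term in the updates of Algorithm~\ref{alg:becdo}. I would establish \eqref{eq:avg_gradient} first and then derive \eqref{eq:avg_model} as an immediate consequence.

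For \eqref{eq:avg_gradient}, the base case $t=0$ holds by the hypothesis $\bar\vv^0 = \tfrac{1}{n}\tilde\nabla_b F(\mX^0)\vone$. For the inductive step I would take Line~\ref{line:update-v}, namely $\mV^{t+1} = \mV^t + \gamma \mG^t(\mW - \mI) + \tilde\nabla_b F(\mX^{t+1}) - \tilde\nabla_b F(\mX^t)$, and right-multiply by $\tfrac{1}{n}\vone$. Since $\mG^t(\mW - \mI)\vone = \vzero$, the gradient-tracking correction collapses to $\bar\vv^{t+1} = \bar\vv^t + \tfrac{1}{n}\tilde\nabla_b F(\mX^{t+1})\vone - \tfrac{1}{n}\tilde\nabla_b F(\mX^t)\vone$; substituting the inductive hypothesis $\bar\vv^t = \tfrac{1}{n}\tilde\nabla_b F(\mX^t)\vone$ makes the last two terms telescope, leaving exactly $\bar\vv^{t+1} = \tfrac{1}{n}\tilde\nabla_b F(\mX^{t+1})\vone$.

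For \eqref{eq:avg_model}, I would right-multiply Line~\ref{line:update-x}, namely $\mX^{t+1} = \mX^t + \gamma\mH^t(\mW-\mI) - \eta\mV^t$, by $\tfrac{1}{n}\vone$. Again the mixing term $\mH^t(\mW-\mI)\vone$ vanishes, giving $\bar\vx^{t+1} = \bar\vx^t - \eta\bar\vv^t$; plugging in the already-proved \eqref{eq:avg_gradient} yields the claimed $\bar\vx^{t+1} = \bar\vx^t - \tfrac{\eta}{n}\tilde\nabla_b F(\mX^t)\vone$.

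There is no substantive obstacle here: the entire content is the observation that averaging kills the consensus/mixing terms and that the gradient-tracking increments telescope, so the running average of $\mV^t$ perfectly tracks the average of the local stochastic gradients at the current iterates. The only point needing a little care is to anchor the induction to the \emph{stochastic} gradient $\tilde\nabla_b F$ rather than the full gradient used in the default initialization $\mV^0 = \nabla F(\mX^0)$; phrasing the hypothesis on $\bar\vv^0$, as the lemma does, is precisely what lets the same statement serve both the full-gradient and minibatch analyses.
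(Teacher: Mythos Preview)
Your proposal is correct and matches the paper's proof essentially step for step: induction on $t$ for \eqref{eq:avg_gradient} using $(\mW-\mI)\vone=\vzero$ to kill the mixing term and the telescoping of the gradient-tracking increment, then right-multiplying Line~\ref{line:update-x} by $\tfrac{1}{n}\vone$ and substituting \eqref{eq:avg_gradient} to get \eqref{eq:avg_model}. Your closing remark about anchoring the hypothesis on $\bar\vv^0$ to cover both the full-gradient and stochastic cases is an accurate observation that the paper leaves implicit.
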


\begin{proof}
We first prove \eqref{eq:avg_gradient} by induction. For the base case ($t=0$), the relation \eqref{eq:avg_gradient} is obviously true by the means of initialization. Now suppose that at the $t$-th iteration, the relation \eqref{eq:avg_gradient} is true, i.e.,
\[\bar\vv^t = \frac{1}{n}\tilde\nabla_b F(\mX^t)\vone,\]
then at the $(t+1)$-th iteration, we have
\begin{align}
    \bar\vv^{t+1} & = \frac{1}{n}\mV^{t+1}\vone \nonumber \\
    & = \frac{1}{n}\mV^t\vone + \frac{1}{n} \gamma \mG^{t} (\mW - \mI)\vone + \frac{1}{n}\left(\tilde\nabla_b F(\mX^{t+1}) - \tilde\nabla_b F(\mX^{t})\right)\vone \label{eq:lem-gt-defn} \\
    & = \frac{1}{n}\mV^t\vone + \frac{1}{n}\left(\tilde\nabla_b F(\mX^{t+1}) - \tilde\nabla_b F(\mX^{t})\right)\vone \nonumber \\
    & = \frac{1}{n}\tilde\nabla_b F(\mX^{t+1})\vone. \nonumber
\end{align}
where (\ref{eq:lem-gt-defn}) follows from the update rule of \beer (cf. Line \ref{line:update-v}), the penultimate line follows from $\mW\vone = \vone$, and the last line follows from the induction hypothesis at the $t$-th iteration. Thus the induction hypothesis is also true at the $(t+1)$-th iteration, and we complete the proof of \eqref{eq:avg_gradient}. 

For \eqref{eq:avg_model}, it follows from the update rule of \beer (cf. Line \ref{line:update-x}) that
\begin{align*}
    \bar\vx^{t+1} & = \bar\vx^{t} + \frac{\gamma}{n}\mH^t(\mW-\mI)\mathbf{1} - \frac{\eta}{n}\mV^t\mathbf{1} \\
    & = \bar\vx^{t} - \eta\bar\vv^t = \bar\vx^t - \frac{\eta}{n} \tilde\nabla_b F(\mX^t)\vone,
\end{align*}
where the second line uses $\mW\vone = \vone$ and \eqref{eq:avg_gradient}.
\end{proof}

\subsection{Recursive relations of main errors} 
 
As mentioned previously, the proof is centered around controlling the following set of errors which we repeat below for convenience (cf.~ (\ref{eq:defn-omega})),
\begin{align}
\text{(compression approximation error:) } \quad  \Omega_1^t & = \E\normF{\mH^t - \mX^t}^2 , \qquad     \Omega_2^t  = \E\normF{\mG^t - \mV^t}^2 ,\nonumber \\
\text{(consensus error:) } \quad   \Omega_3^t & = \E\normF{\mX^t - \bar\vx^t\vone^\top}^2, ~~~~\Omega_4^t  = \E\normF{\mV^t - \bar\vv^t\vone^\top}^2, \nonumber \\
\text{(gradient norm:) }  \quad    \Omega_5^t & = \E\norm{\bar\vv^t}^2.\nonumber 
\end{align}
In particular, we aim to build a set of recursive relations of $\Omega_1^t$ to $\Omega_4^t$, which will be specified in the following lemma.  
%We can view $C$ to be a constant.

\begin{lemma}\label{lem:omega-1}
Suppose Assumptions \ref{ass:mixing}, \ref{ass:smooth} and \ref{ass:bounded-variance} hold, then for any $t \ge 0$, we have
\begin{subequations}\label{eq:omega_re}
\begin{align}
    \Omega_1^{t+1} & \le \left(1-\frac{\alpha}{2} + \frac{6\gamma^2C}{\alpha}\right)\Omega_1^t + 0\cdot\Omega_2^t + \frac{6\gamma^2 C}{\alpha}\Omega_3^t + \frac{6\eta^2}{\alpha}\Omega_4^t + \frac{6n\eta^2}{\alpha}\Omega_5^t, \label{eq:omega-1} \\
\Omega_2^{t+1} & \le \frac{18\gamma^2 C L^2}{\alpha}\Omega_1^t + \left(1-\frac{\alpha}{2} + \frac{6\gamma^2 C}{\alpha}\right)\Omega_2^t + \frac{18\gamma^2 C L^2}{\alpha}\Omega_3^t + \frac{6\gamma^2 C+18L^2\eta^2}{\alpha}\Omega_4^t + \frac{18L^2\eta^2 n}{\alpha}\Omega_5^t + \frac{12 n \sigma^2}{b\alpha}, \label{eq:omega-2} \\
    \Omega_3^{t+1} & \le \frac{6\gamma C}{\rho}\Omega_1^t + 0\cdot \Omega_2^t + \left(1-\frac{\gamma\rho}{2}\right)\Omega_3^t + \frac{6\eta^2}{\gamma \rho}\Omega_4^t + 0\cdot\Omega_5^t, \label{eq:omega-3} \\
 \Omega_4^{t+1} & \le \frac{18\gamma C L^2}{\rho}\Omega_1^t + \frac{6\gamma C}{\rho}\Omega_2^t + \frac{18\gamma CL^2}{\rho}\Omega_3^t + \left(1-\frac{\gamma \rho}{2} + \frac{18L^2\eta^2}{\gamma\rho}\right)\Omega_4^t + \frac{18n\eta^2 L^2}{\gamma\rho}\cdot\Omega_5^t + \frac{12 n \sigma^2}{b\gamma\rho}, \label{eq:omega-4}
\end{align}
\end{subequations}
where 
\begin{equation}\label{eq:C}
C = \norm{\mW - \mI}^2 = \sigma_{\max}(\mW - \mI)^2
\end{equation}
 is the square of the maximum singular value of the matrix $\mW - \mI$. 
\end{lemma}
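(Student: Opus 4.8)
The plan is to treat the four inequalities in two pairs according to the mechanism producing the contraction: the compression errors $\Omega_1^{t+1},\Omega_2^{t+1}$ will contract by the factor $1-\alpha$ coming from the compression property \eqref{eq:comp}, while the consensus errors $\Omega_3^{t+1},\Omega_4^{t+1}$ will contract by $1-\gamma\rho$ coming from mixing with $\widetilde{\mW}=\mI+\gamma(\mW-\mI)$, whose spectral gap is at least $\gamma\rho$ by Lemma~\ref{lem:prop_mixing}. In every case the recipe is the same: isolate a single ``main term'' enjoying a genuine contraction, collect every remaining term into a perturbation, and separate the two by Young's inequality \eqref{eq:rti-1} with a parameter tuned so that the contraction factor degrades only to $1-\alpha/2$ (resp.\ $1-\gamma\rho/2$); the $1/\alpha$ (resp.\ $1/(\gamma\rho)$) that Young's inequality leaves behind is precisely what produces the denominators in the stated coefficients. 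Two algebraic identities will be used throughout: $\bar\vx^t\vone^\top(\mW-\mI)=\vzero$ and $\bar\vv^t\vone^\top(\mW-\mI)=\vzero$ (since $\mW$ is doubly stochastic), which let me peel the consensus-centered parts off the mixing terms, together with the operator-norm bound $\normF{\mM(\mW-\mI)}^2\le C\normF{\mM}^2$ where $C=\norm{\mW-\mI}^2$.

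For $\Omega_1^{t+1}$ I would start from the \algname{EF21}-style identity $\mH^{t+1}-\mX^{t+1}=\gC(\mX^{t+1}-\mH^t)-(\mX^{t+1}-\mH^t)$ implied by Lines~\ref{line:compress-x}--\ref{line:update-h}, so that \eqref{eq:comp} (applied column-wise) gives $\Omega_1^{t+1}\le(1-\alpha)\,\E\normF{\mX^{t+1}-\mH^t}^2$. Expanding via Line~\ref{line:update-x} as $\mX^{t+1}-\mH^t=(\mX^t-\mH^t)+\gamma\mH^t(\mW-\mI)-\eta\mV^t$, I take $\mX^t-\mH^t$ as the main term and decompose the perturbation, using the two identities above, into $\gamma(\mH^t-\mX^t)(\mW-\mI)$, $\gamma(\mX^t-\bar\vx^t\vone^\top)(\mW-\mI)$, $\eta(\mV^t-\bar\vv^t\vone^\top)$ and $\eta\bar\vv^t\vone^\top$; applying \eqref{eq:rti-2} and the operator-norm bound, these contribute to $\Omega_1^t,\Omega_3^t,\Omega_4^t$ and $\Omega_5^t$ respectively (the last through $\normF{\bar\vv^t\vone^\top}^2=n\norm{\bar\vv^t}^2$), yielding \eqref{eq:omega-1}. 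The bound \eqref{eq:omega-2} for $\Omega_2^{t+1}$ is entirely parallel, starting from $\mG^{t+1}-\mV^{t+1}=\gC(\mV^{t+1}-\mG^t)-(\mV^{t+1}-\mG^t)$ and Line~\ref{line:update-v}; the only new ingredient is the gradient-difference term $\tilde\nabla_b F(\mX^{t+1})-\tilde\nabla_b F(\mX^t)$, which I split into a deterministic part controlled by $L$-smoothness (Assumption~\ref{ass:smooth}) and a centered stochastic part controlled by Assumption~\ref{ass:bounded-variance}.

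For $\Omega_3^{t+1}$ and $\Omega_4^{t+1}$ I would first rewrite the updates as mixing steps: Line~\ref{line:update-x} becomes $\mX^{t+1}=\mX^t\widetilde\mW+\gamma(\mH^t-\mX^t)(\mW-\mI)-\eta\mV^t$, and likewise $\mV^{t+1}=\mV^t\widetilde\mW+\gamma(\mG^t-\mV^t)(\mW-\mI)+\tilde\nabla_b F(\mX^{t+1})-\tilde\nabla_b F(\mX^t)$. Subtracting the corresponding averages (using $\bar\vx^{t+1}=\bar\vx^t-\eta\bar\vv^t$ and $\bar\vv^{t+1}=\frac1n\tilde\nabla_b F(\mX^{t+1})\vone$ from Lemma~\ref{lem:gt}), the main terms become $\mX^t\widetilde\mW-\bar\vx^t\vone^\top$ and $\mV^t\widetilde\mW-\bar\vv^t\vone^\top$, to which \eqref{eq:lem-omega-3-claim1} applied with $\widetilde\mW$ gives the $(1-\gamma\rho)$ contraction; Young's inequality then degrades it to $1-\gamma\rho/2$ and spreads the remaining pieces over the other errors. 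A single auxiliary estimate feeds both \eqref{eq:omega-2} and \eqref{eq:omega-4}, namely $\E\normF{\mX^{t+1}-\mX^t}^2\le 4\gamma^2 C\,\Omega_1^t+4\gamma^2 C\,\Omega_3^t+4\eta^2\,\Omega_4^t+4n\eta^2\,\Omega_5^t$, obtained from $\mX^{t+1}-\mX^t=\gamma\mH^t(\mW-\mI)-\eta\mV^t$ by the same four-way split; combined with smoothness, this converts $\normF{\tilde\nabla_b F(\mX^{t+1})-\tilde\nabla_b F(\mX^t)}^2$ into the $L^2$-weighted contributions to $\Omega_1^t,\Omega_3^t,\Omega_4^t,\Omega_5^t$ appearing in \eqref{eq:omega-2} and \eqref{eq:omega-4}.

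I expect the main obstacle to be the stochastic gradient-difference term rather than the contractions themselves. Bounding $\E\normF{\tilde\nabla_b F(\mX^{t+1})-\tilde\nabla_b F(\mX^t)}^2$ requires peeling off the centered noise $\tilde\nabla_b F-\nabla F$ at both iterates and controlling it by $n\sigma^2/b$ via Assumption~\ref{ass:bounded-variance}, while routing the deterministic increment $\nabla F(\mX^{t+1})-\nabla F(\mX^t)$ through smoothness and the auxiliary $\E\normF{\mX^{t+1}-\mX^t}^2$ bound above, all arranged so that the surviving variance constants land as $12n\sigma^2/(b\alpha)$ and $12n\sigma^2/(b\gamma\rho)$. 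The remaining difficulty is purely bookkeeping: because the four recursions are mutually coupled (each perturbation re-expresses one error in terms of the others plus $\Omega_5^t$), the Young's-inequality parameters and the operator-norm factors must be tracked carefully to land on the precise constants $\tfrac12,6,18$ displayed in \eqref{eq:omega_re}.
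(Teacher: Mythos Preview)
Your proposal is correct and follows essentially the same route as the paper: contraction from compression for $\Omega_1,\Omega_2$, contraction from mixing with $\widetilde\mW$ for $\Omega_3,\Omega_4$, Young's inequality to separate main term from perturbation, and the auxiliary bound on $\normF{\mX^{t+1}-\mX^t}^2$ feeding the gradient-difference term via smoothness and Assumption~\ref{ass:bounded-variance}. Two small technical points where the paper differs from your sketch and which matter only for hitting the exact constants: (i) for $\normF{\mX^{t+1}-\mX^t}^2$ the paper does a \emph{three}-way split on $\gamma(\mH^t-\mX^t)(\mW-\mI)+\gamma(\mX^t-\bar\vx^t\vone^\top)(\mW-\mI)-\eta\mV^t$ and then uses the exact orthogonal decomposition $\normF{\mV^t}^2=\Omega_4^t+n\Omega_5^t$, which yields the factor $3$ (hence the displayed $6$ and $18$) rather than the $4$ your four-way split would give; (ii) for $\Omega_4^{t+1}$ the paper does not subtract $\bar\vv^{t+1}\vone^\top$ directly but first invokes $\normF{\mV^{t+1}-\bar\vv^{t+1}\vone^\top}^2\le\normF{\mV^{t+1}-\bar\vv^t\vone^\top}^2$, which avoids carrying the centered gradient increment and keeps the bookkeeping cleaner.
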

Note that the eigenvalues of $\mW$ and $\mI$ all lies in $[-1,1]$, and thus clearly $C \le 4$.

\begin{proof}
We will establish the inequalities in \eqref{eq:omega_re} one by one.

\paragraph{Bounding $\Omega_1^t$ in \eqref{eq:omega-1}}
First from the update rule of \beer (cf. Line \ref{line:update-h}), we have
\begin{align}
    \normF{\mH^{t+1} - \mX^{t+1}}^2 & = \normF{\mH^{t} + \gC(\mX^{t+1} - \mH^t) - \mX^{t+1}}^2 \nonumber \\
    & \le (1-\alpha)\normF{\mX^{t+1} - \mH^t}^2 \nonumber \\
    & \le \left(1-\frac{\alpha}{2}\right)\normF{\mX^{t} - \mH^t}^2 + \frac{2}{\alpha}\normF{\mX^{t+1} - \mX^{t}}^2, \label{eq:noodle}
    \end{align}
where the first inequality comes from the definition of compression operators (Definition \ref{def:comp}) and the second inequality comes from Young's inequality. It then boils down to bound $\normF{\mX^{t+1} - \mX^{t}}^2$, for which we have
\begin{align}
    \normF{\mX^{t+1} - \mX^{t}}^2 & = \normF{\gamma \mH^t (\mW - \mI)- \eta \mV^t}^2 \\
    & = \normF{\gamma (\mH^t - \mX^t) (\mW - \mI) + \gamma (\mX^t - \bar\vx^t\vone^\top) (\mW - \mI) - \eta \mV^t}^2  \nonumber \\
    & \le 3\gamma^2 C\normF{\mX^t - \mH^t} + 3\gamma^2C\normF{\mX^t - \bar\vx^t\vone^\top}^2 + 3\eta^2\normF{\mV^t}^2  \nonumber\\
    & = 3\gamma^2 C\normF{\mX^t - \mH^t} + 3\gamma^2C\normF{\mX^t - \bar\vx^t\vone^\top}^2 + 3\eta^2\normF{\mV^t - \bar\vv^t\vone^\top}^2 + 3\eta^2 n \norm{\bar\vv^t}^2, \label{eq:diff_x}
\end{align}
where in the first line we use the update rule of \beer (cf. Line \ref{line:update-x}), in the second line we use the property of the mixing matrix $\vone^\top\mW = \vone^\top$, and in the third line, we apply Young's inequality (cf.~\eqref{eq:rti-2}). In the fourth line, we use $\|\vv  \|^2 = \|\vv - \bar{v}\vone\|^2  + n \bar{v}^2$ for any vector $\vv$ with an average $\bar{v}$. Plugging this back into \eqref{eq:noodle}, we get
\begin{align*}
    \normF{\mH^{t+1} - \mX^{t+1}}^2 & \le \left(1-\frac{\alpha}{2} + \frac{6\gamma^2C}{\alpha}\right)\normF{\mX^{t} - \mH^t}^2 + \frac{6\gamma^2C}{\alpha}\normF{\mX^t - \bar\vx^t\vone^\top}^2 \\
    & \qquad + \frac{6\eta^2}{\alpha}\normF{\mV^t - \bar\vv^t\vone^\top}^2 + \frac{6n\eta^2}{\alpha}\norm{\bar\vv^t}^2.
\end{align*}
Plugging in the definitions of $\Omega_i^t$, we obtain \eqref{eq:omega-1}.

\paragraph{Bounding $\Omega_2^t$ in \eqref{eq:omega-2}}
Similar to the derivation of  \eqref{eq:omega-1}, by applying the update rule of $\mG^t$ in \beer (Line \ref{line:update-g}), the definition of compression operators (Definition \ref{def:comp}), and Young's inequality, we have
\begin{align}
    \normF{\mV^{t+1} - \mG^{t+1}}^2 & = \normF{\mG^{t} + \gC(\mV^{t+1} - \mG^{t}) - \mV^{t+1}}^2  \nonumber \\
    & \le (1-\alpha)\normF{\mG^{t}  - \mV^{t+1}}^2  \nonumber\\
    & \le \left(1-\frac{\alpha}{2}\right)\normF{\mG^{t}  - \mV^{t}}^2 + \frac{2}{\alpha}\normF{\mV^{t+1} - \mV^{t}}^2.\label{eq:rice}
\end{align}
It then boils down to bound $\normF{\mV^{t+1} - \mV^{t}}^2$. By the update rule of \beer (cf. Line \ref{line:update-v}), we have
\begin{align}
    \normF{\mV^{t+1} - \mV^{t}}^2 & = \normF{\gamma\mG^t(\mW - \mI) + (\tilde\nabla_b F(\mX^{t+1}) - \tilde\nabla_b F(\mX^{t}))}^2 \nonumber \\
    & = \normF{\gamma (\mG^t - \mV^t)(\mW - \mI) + \gamma(\mV^t-\bar\vv^t\vone^\top)(\mW - \mI) + (\tilde\nabla_b F(\mX^{t+1}) - \tilde\nabla_b F(\mX^{t})}^2 \nonumber \\
    & \overset{\mathrm{(i)}}{\leq} 3\gamma^2C \normF{\mG^t - \mV^t}^2 + 3\gamma^2 C\normF{\mV^t-\bar\vv^t\vone^\top}^2 + 3\normF{\tilde\nabla_b F(\mX^{t+1}) - \tilde\nabla_b F(\mX^{t})}^2 \nonumber \\
    &  \overset{\mathrm{(ii)}}{\leq}  3\gamma^2C \normF{\mG^t - \mV^t}^2 + 3\gamma^2 C\normF{\mV^t-\bar\vv^t\vone^\top}^2 + 3\normF{\nabla F(\mX^{t+1}) - \nabla F(\mX^{t})}^2 + \frac{6 n \sigma^2}{b} \nonumber \\
    &  \overset{\mathrm{(iii)}}{\leq}  3\gamma^2C \normF{\mG^t - \mV^t}^2 + 3\gamma^2 C\normF{\mV^t-\bar\vv^t\vone^\top}^2 + 3L^2\normF{\mX^{t+1} - \mX^t}^2 + \frac{6 n \sigma^2}{b} \nonumber \\
    & \overset{\mathrm{(iv)}}{\leq}  3\gamma^2C \normF{\mG^t - \mV^t}^2 + (3\gamma^2 C + 9L^2\eta^2)\normF{\mV^t-\bar\vv^t\vone^\top}^2 \nonumber \\
    & \qquad + 9\gamma^2CL^2\normF{\mX^t-\mH^t}^2 + 9\gamma^2CL^2\normF{\mX^t-\bar\vx^t\vone^\top}^2 + 9L^2\eta^2n\norm{\bar\vv^t}^2 + \frac{6n \sigma^2}{b}, \nonumber 
\end{align}
where (i) comes from Young's inequality (cf.~\eqref{eq:rti-2}) and basic facts of matrix norm (cf.~\eqref{eq:C}), (ii) comes from the bounded variance assumption (Assumption \ref{ass:bounded-variance}), (iii) comes from the smoothness assumption (Assumption \ref{ass:smooth}), and (iv) follows from \eqref{eq:diff_x}.
Combining the above inequality with \eqref{eq:rice}, we have
\begin{align*}
    \normF{\mV^{t+1} - \mG^{t+1}}^2 & \le \left(1-\frac{\alpha}{2}\right)\normF{\mG^{t}  - \mV^{t}}^2 + \frac{2}{\alpha}\normF{\mV^{t+1} - \mV^{t}}^2 \\
    & \le \left(1-\frac{\alpha}{2} + \frac{6\gamma^2 C}{\alpha}\right)\normF{\mG^{t}  - \mV^{t}}^2 + \frac{6\gamma^2 C + 18L^2\eta^2}{\alpha}\normF{\mV^t-\bar\vv^t\vone^\top}^2 \\
    & \quad + \frac{18\gamma^2 C L^2}{\alpha}\normF{\mX^t-\mH^t}^2 + \frac{18\gamma^2 C L^2}{\alpha}\normF{\mX^t-\bar\vx^t\vone^\top}^2 + \frac{18L^2\eta^2 n}{\alpha}\norm{\bar\vv^t}^2 + \frac{12 n \sigma^2}{b\alpha}.
\end{align*}
Plugging in the definitions of $\Omega_i^t$, we obtain \eqref{eq:omega-2}.

 \paragraph{Bounding $\Omega_3^t$ in \eqref{eq:omega-3}}
To bound the consensus error $\normF{\mX^{t+1}-\bar\vx^{t+1}\vone^\top}^2$, by the  update rule of \beer (cf. Line \ref{line:update-x}), we have
\begin{align}
 &   \normF{\mX^{t+1}-\bar\vx^{t+1}\vone^\top}^2 \nonumber \\
 & = \normF{\mX^t + \gamma \mH^t (\mW - \mI)- \eta \mV^t - \bar\vx^t \vone^\top + \eta\bar\vv^t \vone^\top}^2 \nonumber \\
%    & = \normF{ \mX^t \widetilde\mW + \gamma (\mH^t - \mX^t) (\mW - \mI)- \eta \mV^t - \bar\vx^t \vone^\top - \eta\bar\vv^t \vone^\top}^2 \nonumber \\
    &\overset{\mathrm{(i)}}{=} \normF{ \mX^t \widetilde\mW - \bar\vx^t \vone^\top + \gamma (\mH^t - \mX^t) (\mW - \mI)- \eta \mV^t + \eta\bar\vv^t \vone^\top}^2 \nonumber \\
    & \overset{\mathrm{(ii)}}{\le} (1+\beta)(1-\gamma \rho)\normF{\mX^t - \bar\vx^t \vone^\top}^2  + \left(1+\frac{1}{\beta}\right)\left(2\gamma^2\normF{(\mH^t - \mX^t) (\mW - \mI)}^2 + 2\eta^2\normF{\mV^t-\bar\vv^t\vone^\top}^2\right) \nonumber \\
    & \overset{\mathrm{(iii)}}{\le}  \left(1-\frac{\gamma \rho}{2}\right)\normF{\mX^t - \bar\vx^t \vone^\top}^2  + \left(1+\frac{2}{\gamma\rho}\right)\left(2\gamma^2\normF{(\mH^t - \mX^t) (\mW - \mI)}^2 + 2\eta^2\normF{\mV^t-\bar\vv^t\vone^\top}^2\right) \nonumber \\
    & \overset{\mathrm{(iv)}}{\le}  \left(1-\frac{\gamma \rho}{2}\right)\normF{\mX^t - \bar\vx^t \vone^\top}^2  + \left(1+\frac{2}{\gamma\rho}\right)\left(2\gamma^2C\normF{\mH^t - \mX^t}^2 + 2\eta^2\normF{\mV^t-\bar\vv^t\vone^\top}^2\right) \nonumber \\
    & \le \left(1-\frac{\gamma \rho}{2}\right)\normF{\mX^t - \bar\vx^t \vone^\top}^2 + \frac{6\gamma C}{\rho}\normF{\mH^t - \mX^t}^2 + \frac{6\eta^2}{\gamma\rho}\normF{\mV^t-\bar\vv^t\vone^\top}^2, \nonumber
\end{align}
where (i) follows from the definition $\widetilde\mW = \mI + \gamma(\mW - \mI)$, (ii) follows from applying Young's inequality twice and Lemma~\ref{lem:prop_mixing}, i.e.
$$\normF{ \mX^t \widetilde\mW - \bar\vx^t \vone^\top } \leq (1-\gamma \rho)\normF{\mX^t - \bar\vx^t \vone^\top}^2 ,$$
(iii) follows by choosing $\beta = \gamma\rho/2$, and (iv) uses the definition of $C$ (cf.~\eqref{eq:C}). Plugging in the definitions of $\Omega_i^t$, we obtain \eqref{eq:omega-3}.

\paragraph{Bounding $\Omega_4^t$ in \eqref{eq:omega-4}}
First, note that
\begin{align*}
    \normF{\mV^{t+1} - \bar\vv^{t+1}\vone^\top}^2 & = \normF{\mV^{t+1} - \bar\vv^{t}\vone^\top + \bar\vv^{t}\vone^\top - \bar\vv^{t+1}\vone^\top}^2 \\
    & = \normF{\mV^{t+1} - \bar\vv^{t}\vone^\top}^2 - n\norm{\bar\vv^{t+1} - \bar\vv^{t}}^2 \\
    & \le \normF{\mV^{t+1} - \bar\vv^{t}\vone^\top}^2.
\end{align*}
Thus by the update rule of \beer (cf. Line \ref{line:update-v}), we have
\begin{align*}
    &~~~~ \normF{\mV^{t+1} - \bar\vv^{t+1}\vone^\top}^2\\
    & \le \normF{\mV^{t+1} - \bar\vv^{t}\vone^\top}^2 \\
    & = \normF{\mV^{t} + \gamma \mG^{t+1}(\mW - \mI) + \tilde\nabla_b F(\mX^{t+1}) - \tilde\nabla_b F(\mX^t) - \bar\vv^{t}\vone^\top}^2 \\
    & = \normF{( \mV^{t} \widetilde\mW - \bar\vv^{t}\vone^\top) + \gamma (\mG^{t+1} - \mV^{t})(\mW - \mI) + (\tilde\nabla_b F(\mX^{t+1}) - \tilde\nabla_b F(\mX^t))}^2 \\
    &  \overset{\mathrm{(i)}}{\le}  \left(1-\frac{\gamma\rho}{2}\right)\normF{\mV^{t} - \bar\vv^{t}\vone^\top}^2 + \left(1+\frac{2}{\gamma\rho}\right)\left(2\gamma^2 C\normF{\mG^t-\mV^t}^2 + 2L^2\normF{\mX^{t+1} - \mX^t}^2 + \frac{4n\sigma^2}{b}\right) \\
    & \overset{\mathrm{(ii)}}{\le}  \left(1-\frac{\gamma\rho}{2}\right)\normF{\mV^{t} - \bar\vv^{t}\vone^\top}^2 + \frac{6\gamma C}{\rho}\normF{\mG^t-\mV^t}^2 + \frac{6 L^2}{\gamma\rho}\normF{\mX^{t+1} - \mX^t}^2 + \frac{12 n \sigma^2}{b\gamma\rho}\\
    & \le \left(1-\frac{\gamma\rho}{2} + \frac{18 L^2\eta^2}{\gamma\rho}\right)\normF{\mV^{t} - \bar\vv^{t}\vone^\top}^2 + \frac{6\gamma C}{\rho}\normF{\mG^t-\mV^t}^2 \\
    & \quad + \frac{18\gamma C L^2}{\rho}\normF{\mX^t-\mH^t}^2 + \frac{18\gamma C L^2}{\rho}\normF{\mX^t-\bar\vx^t\vone^\top}^2 + \frac{18n\eta^2 L^2}{\gamma\rho}\norm{\bar\vv^t}^2 + \frac{12 n \sigma^2}{b\gamma\rho},
\end{align*}
where (i) and (ii) are obtained similarly as the derivation of \eqref{eq:omega-3}, and the last line follows from \eqref{eq:diff_x}.
Thus, we can get \eqref{eq:omega-4} by plugging in the definitions of $\Omega_i^t$ and conclude the proof.
\end{proof}

\subsection{Proof of Theorem \ref{thm:nonconvex-full} and \ref{thm:nonconvex}}
\label{sec:proof-ncv}
Note that Theorem \ref{thm:nonconvex} is a strict generalization of Theorem \ref{thm:nonconvex-full}, and thus we will directly prove Theorem \ref{thm:nonconvex}.
This proof makes use of Lemma \ref{lem:gt} and Lemma~\ref{lem:omega-1}, by constructing some proper Lyapunov function and demonstrate its descending property using a linear system argument, which is also used in, e.g., \citet{li2021destress,liao2021compressed}.

\paragraph{Step 1: establishing a ``descent'' property of the function value}
First, we have the following inequality captures the ``descent'' of the function value.
\begin{align}
    f(\bar\vx^{t+1}) & \overset{\mathrm{(i)}}{ \le} f(\bar\vx^{t}) - \eta\dotp{\bar\vv^t}{\nabla f(\bar\vx^{t})} + \frac{\eta^2 L}{2}\norm{\bar\vv^{t}}^2 \nonumber \\
    & = f(\bar\vx^{t}) - \frac{\eta}{2}\norm{\nabla f(\bar\vx^{t})}^2 - \frac{\eta}{2}\norm{\bar\vv^{t}}^2 + \frac{\eta}{2}\norm{\nabla f(\bar\vx^{t}) - \bar\vv^t}^2 + \frac{\eta^2 L}{2}\norm{\bar\vv^{t}}^2 \nonumber \\
    & = f(\bar\vx^{t}) - \frac{\eta}{2}\norm{\nabla f(\bar\vx^{t})}^2 + \frac{\eta}{2}\norm{\nabla f(\bar\vx^{t}) - \bar\vv^t}^2 - \left(\frac{\eta}{2} - \frac{\eta^2 L}{2}\right)\norm{\bar\vv^{t}}^2 \nonumber \\
    & \overset{\mathrm{(ii)}}{ \le} f(\bar\vx^{t}) - \frac{\eta}{2}\norm{\nabla f(\bar\vx^{t})}^2 + \frac{\eta}{2n^2}\norm{\nabla F(\bar\vx^t)\vone - \tilde\nabla_b F(\mX^t)\vone}^2 - \left(\frac{\eta}{2} - \frac{\eta^2 L}{2}\right)\norm{\bar\vv^{t}}^2 \nonumber  \\
    & = f(\bar\vx^{t}) - \frac{\eta}{2}\norm{\nabla f(\bar\vx^{t})}^2 + \frac{\eta}{2n^2}\norm{\nabla F(\bar\vx^t)\vone - \nabla F(\mX^t)\vone}^2 - \left(\frac{\eta}{2} - \frac{\eta^2 L}{2}\right)\norm{\bar\vv^{t}}^2
    % + \frac{\eta \sigma^2}{2bn} 
    \nonumber \\
    &~~~+ \frac{\eta}{2n^2}\norm{\nabla F(\mX^t)\vone - \tilde\nabla_b F(\mX^t)\vone}^2
    + \frac{\eta}{n^2} \left\langle \nabla F(\mX^t)\vone - \tilde\nabla_b F(\mX^t)\vone, \nabla F(\bar\vx^t)\vone - \nabla F(\mX^t)\vone \right\rangle , \notag
    % &  \overset{\mathrm{(iv)}}{ \le} f(\bar\vx^{t}) - \frac{\eta}{2}\norm{\nabla f(\bar\vx^{t})}^2 + \frac{\eta }{2n}\normF{\nabla F(\mX^t) - \nabla F(\bar\vx^t)}^2 - \left(\frac{\eta}{2} - \frac{\eta^2 L}{2}\right)\norm{\bar\vv^{t}}^2 + \frac{\eta \sigma^2}{2bn}\nonumber \\
    % &  \overset{\mathrm{(v)}}{ \le} f(\bar\vx^{t}) - \frac{\eta}{2}\norm{\nabla f(\bar\vx^{t})}^2 + \frac{\eta  L^2}{2n}\normF{\mX^t - \bar\vx^t\vone^\top}^2 - \left(\frac{\eta}{2} - \frac{\eta^2 L}{2}\right)\norm{\bar\vv^{t}}^2 + \frac{\eta \sigma^2}{2bn}, \nonumber
\end{align}
where (i) comes from the $L$-smooth assumption (Assumption \ref{ass:smooth}), (ii) comes from Lemma \ref{lem:gt}.
Take expectation on both sides, and using the bounded variance assumption (Assumption \ref{ass:bounded-variance}) and independence of stochastic samples, we get
\begin{align}
    \E f(\bar\vx^{t+1})
    % & \overset{\mathrm{(i)}}{ \le} f(\bar\vx^{t}) - \eta\dotp{\bar\vv^t}{\nabla f(\bar\vx^{t})} + \frac{\eta^2 L}{2}\norm{\bar\vv^{t}}^2 \nonumber \\
    & \le \E f(\bar\vx^{t}) - \frac{\eta}{2} \E \norm{\nabla f(\bar\vx^{t})}^2 + \frac{\eta}{2n^2} \E \norm{\nabla F(\bar\vx^t)\vone - \nabla F(\mX^t)\vone}^2 - \left(\frac{\eta}{2} - \frac{\eta^2 L}{2}\right) \E \norm{\bar\vv^{t}}^2 + \frac{\eta \sigma^2}{2bn} \nonumber \\
    &  \overset{\mathrm{(i)}}{ \le} \E f(\bar\vx^{t}) - \frac{\eta}{2}\E \norm{\nabla f(\bar\vx^{t})}^2 + \frac{\eta }{2n}\E \normF{\nabla F(\mX^t) - \nabla F(\bar\vx^t)}^2 - \left(\frac{\eta}{2} - \frac{\eta^2 L}{2}\right)\E \norm{\bar\vv^{t}}^2 + \frac{\eta \sigma^2}{2bn}\nonumber \\
    &  \overset{\mathrm{(ii)}}{ \le} \E f(\bar\vx^{t}) - \frac{\eta}{2}\E \norm{\nabla f(\bar\vx^{t})}^2 + \frac{\eta  L^2}{2n} \E \normF{\mX^t - \bar\vx^t\vone^\top}^2 - \left(\frac{\eta}{2} - \frac{\eta^2 L}{2}\right)\E \norm{\bar\vv^{t}}^2 + \frac{\eta \sigma^2}{2bn}, \nonumber
\end{align}
where (i) comes from Young's inequality, and (ii) comes from the $L$-smooth assumption (Assumption \ref{ass:smooth}) again. % Taking the expectation on both sides, we get
Finally, by substituting definitions of $\Omega_3^t$ and $\Omega_5^t$,
we reach
\begin{equation}\label{eq:descent}
\E f(\bar\vx^{t+1}) \le \E f(\bar\vx^t) - \frac{\eta}{2}\E\norm{\nabla f(\bar\vx^t)}^2 + \frac{\eta L^2}{2n}\Omega_3^t - \left(\frac{\eta}{2} - \frac{\eta^2 L}{2}\right)\Omega_5^t + \frac{\eta \sigma^2}{2bn}.
\end{equation}

\paragraph{Step 2: constructing the Lyapunov function}  By representing
\begin{align}
\bm{\Omega}^t = [   \Omega_1^{t} \quad    \Omega_2^{t} \quad   \Omega_3^{t} \quad  \Omega_4^{t} ]^{\top} ,
\end{align}
Lemma~\ref{lem:omega-1} can be written more compactly as
\begin{align}\label{eq:ls}
\bm{\Omega}^{t+1}& \leq \underbrace{ \begin{bmatrix}
1-\frac{\alpha}{2} + \frac{6\gamma^2C}{\alpha}  & 0 &  \frac{6\gamma^2 C}{\alpha} &   \frac{6\eta^2}{\alpha} \\
\frac{18\gamma^2 C L^2}{\alpha} &  1-\frac{\alpha}{2} + \frac{6\gamma^2 C}{\alpha} & \frac{18\gamma^2 C L^2}{\alpha} & \frac{6\gamma^2 C+18L^2\eta^2}{\alpha}  \\
 \frac{6\gamma C}{\rho} & 0 & 1-\frac{\gamma\rho}{2} &  \frac{6\eta^2}{\gamma \rho} \\
  \frac{18\gamma C L^2}{\rho} &  \frac{6\gamma C}{\rho} & \frac{18\gamma CL^2}{\rho} &1-\frac{\gamma \rho}{2} + \frac{18L^2\eta^2}{\gamma\rho}
\end{bmatrix} }_{=: \bm{A}}  \bm{\Omega}^{t}  +  \underbrace{ \begin{bmatrix}
\frac{6n\eta^2}{\alpha}  \\ 
 \frac{18L^2\eta^2 n}{\alpha} \\
0 \\
\frac{18n\eta^2 L^2}{\gamma\rho} 
\end{bmatrix} }_{=: \bm{b}_1}   \Omega_5^t +  \underbrace{ \begin{bmatrix}
0 \\ 
 \frac{12 n }{ \alpha}\\
0 \\
 \frac{12 n  }{ \gamma\rho}
\end{bmatrix} }_{=: \bm{b}_2}  \frac{\sigma^2}{b} .
\end{align}

Define the Lyapunov function
\begin{align}
\Phi_t  & = \E f(\bar\vx^t) - f^* + \frac{c_1L }{n}\cdot\Omega_1^t + \frac{c_2 \rho^2}{nL}\cdot\Omega_2^t+\frac{c_3 L }{n}\cdot\Omega_3^t+\frac{c_4 \rho^2}{nL}\Omega_4^t \nonumber \\
& = \E f(\bar\vx^t) - f^*  + \vs^{\top}\bm{\Omega}^t, \label{eq:lynapunov_def}
\end{align}
where 
$$\vs =\left[ \frac{c_1L }{n} \quad  \frac{c_2 \rho^2}{nL} \quad \frac{c_3 L }{n} \quad  \frac{c_4 \rho^2}{nL} \right] $$ 
for some constants $c_1,c_2,c_3,c_4$ that will be specified later.

By \eqref{eq:ls} from Lemma~\ref{lem:omega-1} and the descent property \eqref{eq:descent}, we have
\begin{align}
\Phi_{t+1}  &= \E f(\bar\vx^{t+1}) - f^* +  \vs^{\top}\bm{\Omega}^{t+1} \nonumber \\
& \leq  \E f(\bar\vx^t) - f^* - \frac{\eta}{2}\E \norm{\nabla f(\bar\vx^t)}^2 + \frac{\eta L^2}{2n}\Omega_3^t - \left(\frac{\eta}{2} - \frac{\eta^2 L}{2}\right)\Omega_5^t + \frac{\eta \sigma^2}{2bn}   +  \vs^{\top}\left(\bm{A} \bm{\Omega}^{t} + \Omega_5^t\bm{b}_1+\frac{\sigma^2}{b} \bm{b}_2 \right) \label{eq:descent_lyapunov} \\
& \leq \Phi_t - \frac{\eta}{2}\E \norm{\nabla f(\bar\vx^t)}^2 - \left(\frac{\eta}{2} - \frac{\eta^2 L}{2}\right)\Omega_5^t  + \frac{\eta \sigma^2}{2bn}  +( \vs^{\top} \bm{A}  -\vs^{\top} +\vq^{\top} )\bm{\Omega}^{t} +  \vs^{\top}(\Omega_5^t\bm{b}_1+\bm{b}_2\frac{\sigma^2}{b})\nonumber \\
& = \Phi_t - \frac{\eta}{2}\E \norm{\nabla f(\bar\vx^t)}^2 +( \vs^{\top} \bm{A}  -\vs^{\top} +\vq^{\top} )\bm{\Omega}^{t}  - \left(\frac{\eta}{2} - \frac{\eta^2 L}{2} - \bm{s}^{\top}\bm{b}_1 \right)\Omega_5^t  + \left(  \frac{\eta }{2n} + \vs^{\top} \bm{b}_2  \right) \frac{  \sigma^2}{b} ,\nonumber
\end{align}
 where $\vq = [0 \quad 0 \quad \frac{\eta L^2}{2n}\quad 0]^{\top}$. For a moment we assume that there exist some constants $c_1,c_2,c_3,c_4 > 0$ such that
 \begin{subequations}\label{eq:tbp}
 \begin{align}
 \vs^{\top} ( \bm{A} -\bm{I})  + \vq^{\top} & \leq \bm{0}, \\
 \frac{\eta}{2} - \frac{\eta^2 L}{2} - \bm{s}^{\top}\bm{b}_1 & \geq 0, 
 \end{align}
 \end{subequations}
 leading to
 \begin{align*} 
 \Phi_{t+1}  & \leq \Phi_t - \frac{\eta}{2}\E \norm{\nabla f(\bar\vx^t)}^2 +\left(  \frac{\eta }{2n} + \vs^{\top} \bm{b}_2  \right) \frac{  \sigma^2}{b} \leq  \Phi_t - \frac{\eta}{2}\E \norm{\nabla f(\bar\vx^t)}^2 + \frac{36c_4  \sigma^2}{c_{\gamma} L b\alpha}. 
 \end{align*} 
  The proof is thus completed by recursing the above relation over $t=0,\ldots, T-1$.

\paragraph{Step 3: verifying \eqref{eq:tbp}} It boils down to verify  \eqref{eq:tbp} is feasible, and it is equivalent to verify there exist parameters $c_1,c_2,c_3,c_4,\gamma,\eta > 0$ satisfying the following matrix inequality:
\begin{align*}
    \begin{bmatrix}
    \mI - \mA^\top \\
    -\vb_1
    \end{bmatrix}
    \text{diag}\left[\frac{L}{n}, \frac{\rho^2}{nL}, \frac{L}{n}, \frac{\rho^2}{nL}\right]
    \begin{bmatrix}
    c_1 \\
    c_2 \\
    c_3 \\
    c_4 
    \end{bmatrix}
    \ge
    \begin{bmatrix}
    \vq \\
    \frac{\eta^2L}{2} - \frac{\eta}{2}
    \end{bmatrix}.
\end{align*}
Note that by choosing $\gamma = c_{\gamma}\rho\alpha, \eta = c_\eta\gamma\rho^2/L$, and
setting $c_{\gamma} \le \frac{1}{6\sqrt{C}}$ and $c_{\eta}\le \frac{1}{9}$, we get 
\begin{equation}\label{eq:par_bound}
1-\frac{\alpha}{2} + \frac{6\gamma^2C}{\alpha} \le 1-\frac{\alpha}{4},\quad 1-\frac{\gamma \rho}{2} + \frac{18L^2\eta^2}{\gamma\rho} \le 1-\frac{\gamma\rho}{4}.
\end{equation}
Now, it suffices to show that there exist $c_1 ,c_2 ,c_3,c_4,c_{\gamma},c_{\eta} > 0$ such that the following inequalities are satisfied:
\begin{align*}
    \begin{bmatrix}
    \frac{\alpha L}{4n} & -\frac{18 c_{\gamma}^2 \alpha \rho^4 L}{n} & -\frac{6 c_{\gamma} \alpha L}{n} & -\frac{18 C c_{\gamma} \alpha \rho^2 L}{n} \\
    0 & \frac{\alpha \rho^2}{4 n L} & 0 & -\frac{6 C c_{\gamma} \alpha \rho^2}{n L} \\
    -\frac{6 c_{\gamma} \rho \gamma C L}{n} & -\frac{18 c_{\gamma} \rho \gamma L}{n} & \frac{\gamma \rho L}{2 n} & -\frac{18 C \gamma \rho L}{n} \\
    -\frac{6 c_{\eta}^2 c_{\gamma} \gamma \rho^3 }{n L} & -\frac{6 C \gamma \rho^3 (1+3c_{\eta}^2\rho^4)}{n L} & -\frac{6 c_{\eta}^2 \gamma \rho^3}{n L} & \frac{\gamma\rho^3}{4 n L} \\
    -12 c_{\eta} c_{\gamma} \rho^3 \frac{\eta}{2} & -36 c_{\eta} c_{\gamma} \rho^5 \frac{\eta}{2} & 0 & -36 c_{\eta} \rho \frac{\eta}{2}
    \end{bmatrix} \begin{bmatrix}
    c_1 \\
    c_2 \\
    c_3 \\
    c_4 
    \end{bmatrix} \geq \begin{bmatrix}
    0 \\
    0 \\
    \frac{c_{\eta} \gamma \rho L}{2n} \\
    0 \\
    \left(c_{\eta} c_{\gamma} \alpha \rho^3 - 1\right)\frac{\eta}{2}
    \end{bmatrix}.
\end{align*}
Given $\alpha \le 1, \rho \le 1$, this can be further reduced to show the existence of $c_1 ,c_2 ,c_3,c_4,c_{\gamma},c_{\eta} > 0$ such that
\begin{align*}
\begin{bmatrix}
1 & -72Cc_\gamma^2 & -24Cc_\gamma & -72Cc_\gamma \\
0 & 1 & 0  &-24Cc_\gamma \\
-12Cc_\gamma & -35Cc_\gamma & 1 & -36C \\
-24c_\eta^2 c_\gamma & -24c_\gamma(1+3c_\eta^2) & -24c_\eta^2 & 1 \\
- 12c_\eta c_\gamma  & -36c_\eta c_\gamma & 0 & -36 c_\eta
\end{bmatrix} \begin{bmatrix}
c_1 \\
c_2 \\
c_3 \\
c_4 
\end{bmatrix} \geq \begin{bmatrix}
0 \\
0 \\
c_\eta \\
0 \\
-1+c_\eta c_\gamma
\end{bmatrix} .
\end{align*}
This can be easily verified by noting that as long as $c_\eta$ and $c_\gamma$ are set sufficiently small, it is straightforward to find feasible $c_1,c_2,c_3,c_4$.

\subsection{Proof of Theorem \ref{thm:pl-full} and \ref{thm:pl}}
\label{sec:proof-pl}
Since Theorem \ref{thm:pl} is a generalization of Theorem \ref{thm:pl-full}, it suffices to directly prove Theorem \ref{thm:pl}. The proof strategy of Theorem \ref{thm:pl} is similar to that of Theorem \ref{thm:nonconvex}. However, in order to achieve the advertised linear convergence rate under the PL condition (Assumption \ref{ass:pl}), we need to use a slightly different linear system.
 
%\thmpl*
Denote $\kappa: = L/\mu$. Taking the same Lyapunov function $\Phi_t$ in \eqref{eq:lynapunov_def}, by the same argument of Section~\ref{sec:proof-ncv} up to \eqref{eq:descent_lyapunov}, we have
 \begin{align*} 
\Phi_{t+1}  & \leq  \E f(\bar\vx^t) - f^* - \frac{\eta}{2}\E \norm{\nabla f(\bar\vx^t)}^2 + \frac{\eta L^2}{2n}\Omega_3^t - \left(\frac{\eta}{2} - \frac{\eta^2 L}{2}\right)\Omega_5^t + \frac{\eta \sigma^2}{2bn}   +  \vs^{\top}\left(\bm{A} \bm{\Omega}^{t} + \Omega_5^t\bm{b}_1+\frac{\sigma^2}{b} \bm{b}_2 \right) \\
& \leq    (1-\eta\mu)( \E f(\bar\vx^t) - f^* ) +( \vs^{\top} \bm{A}   +\vq^{\top} )\bm{\Omega}^{t}  - \left(\frac{\eta}{2} - \frac{\eta^2 L}{2} - \bm{s}^{\top}\bm{b}_1 \right)\Omega_5^t  + \left(  \frac{\eta }{2n} + \vs^{\top} \bm{b}_2  \right) \frac{  \sigma^2}{b}  \\
& = (1-\eta\mu)\Phi_{t} +\left( \vs^{\top} \bm{A} -(1-\eta\mu) \vs^{\top}  +\vq^{\top} \right)\bm{\Omega}^{t}  - \left(\frac{\eta}{2} - \frac{\eta^2 L}{2} - \bm{s}^{\top}\bm{b}_1 \right)\Omega_5^t  + \left(  \frac{\eta }{2n} + \vs^{\top} \bm{b}_2  \right) \frac{  \sigma^2}{b},
 \end{align*} 
where $\vq = [0 \quad 0 \quad \frac{\eta L^2}{2n}\quad 0]^{\top}$, and the second inequality follows from the PL condition (Assumption \ref{ass:pl}). If we can establish that there exist there exist some constants $c_1,c_2,c_3,c_4$ such that 
 \begin{subequations}\label{eq:tbp-pl}
 \begin{align}
 \vs^{\top} ( \bm{A} -(1-\eta\mu) \bm{I})  + \vq^{\top} & \leq \bm{0}, \\
 \frac{\eta}{2} - \frac{\eta^2 L}{2} - \bm{s}^{\top}\bm{b}_1 & \geq 0, 
 \end{align}
 \end{subequations}
we arrive at
 \begin{align*}  
 \Phi_{t+1}  & \leq ( 1- \eta\mu) \Phi_t   +\left(  \frac{\eta }{2n} + \vs^{\top} \bm{b}_2  \right) \frac{  \sigma^2}{b}  \leq ( 1- \eta\mu) \Phi_t  + \frac{36c_4  \sigma^2}{c_{\gamma} L b\alpha}.
 \end{align*} 
 Recursing the above relation then complete the proof.

It then boils down to establish \eqref{eq:tbp-pl}. By similar arguments as Section~\ref{sec:proof-ncv}, in view of \eqref{eq:par_bound} and $\alpha \le 1, \rho \le 1$, it is sufficient to show there exist constants $c_1 ,c_2 ,c_3,c_4,c_{\gamma},c_{\eta} > 0$ such that
 \begin{align*}
\begin{bmatrix}
1 - \frac{4c_\eta c_\gamma}{\kappa} & -72Cc_\gamma^2 & -24Cc_\gamma & -72Cc_\gamma \\
0 & 1- \frac{4c_\eta c_\gamma}{\kappa} & 0  &-24Cc_\gamma \\
-12Cc_\gamma & -35Cc_\gamma & 1 - \frac{2c_\eta }{\kappa} & -36C \\
-24c_\eta^2 c_\gamma & -24c_\gamma(1+3c_\eta^2) & -24c_\eta^2 & 1- \frac{4c_\eta }{\kappa} \\
- 12c_\eta c_\gamma  & -36c_\eta c_\gamma & 0 & -36 c_\eta
\end{bmatrix} \begin{bmatrix}
c_1 \\
c_2 \\
c_3 \\
c_4 
\end{bmatrix} \geq \begin{bmatrix}
0 \\
0 \\
c_\eta \\
0 \\
-1+c_\eta c_\gamma
\end{bmatrix} .
\end{align*}
This can be easily verified by noting that as long as $c_\eta$ and $c_\gamma$ are set sufficiently small, it is straightforward to find feasible $c_1,c_2,c_3,c_4$.

\end{document}